\def\eqref#1{equation~\ref{#1}}
\def\1{\bm{1}}
\DeclareMathAlphabet{\mathsfit}{\encodingdefault}{\sfdefault}{m}{sl}
\SetMathAlphabet{\mathsfit}{bold}{\encodingdefault}{\sfdefault}{bx}{n}
\newcommand{\R}{\mathbb{R}}
\newtheorem{definition}{Definition}
\newtheorem{lemma}{Lemma}
\newtheorem{theorem}{Theorem}
\definecolor{Box3Color}{RGB}{144, 238, 144}
\definecolor{LightBlue}{RGB}{173, 216, 230}
\definecolor{LightRed}{RGB}{255, 182, 193}
\definecolor{lightpurple}{RGB}{216, 191, 216}
\newcommand{\calL}{\mathcal{L}}
\newcommand{\calA}{\mathcal{A}}
\newcommand{\calB}{\boldsymbol{b}}
\newcommand{\xx}{\boldsymbol{x}}
\newcommand{\setof}[1]{\{ #1 \}}
\newcommand{\faye}{Faye}
\newcommand{\grace}{Grace}
\newcommand{\xavier}{Xavier}
\newcommand{\crossmark}{\textcolor{red}{\scalebox{1.5}{\ding{55}}}}
\newcommand{\checkm}{\textcolor{green}{\scalebox{1.5}{\ding{51}}}}
\title{Limits of Deep Learning: Sequence Modeling through the Lens of Complexity Theory}
\author{%
  Nikola Zubi\'{c}$^{1}$, Federico Sold\'a$^{2,*}$, Aurelio Sulser$^{2,*}$, Davide Scaramuzza$^{1}$ \\[2ex]
  \makecell{$^{1}$Robotics and Perception Group, University of Zurich, Switzerland \\ \texttt{\href{mailto:zubic@ifi.uzh.ch}{zubic@ifi.uzh.ch}, \href{mailto:sdavide@ifi.uzh.ch}{sdavide@ifi.uzh.ch}}} \\[1.5ex]
  \makecell{$^{2}$Algorithms and Optimization Group, ETH Zurich, Switzerland \\ \texttt{\href{mailto:federico.solda@inf.ethz.ch}{federico.solda@inf.ethz.ch}, \href{mailto:asulser@student.ethz.ch}{asulser@student.ethz.ch}}} \\[2ex]
  \textsuperscript{*}Equal contribution
}
\begin{document}

\maketitle

\begin{abstract} 
Despite their successes, deep learning models struggle with tasks requiring complex reasoning and function composition. We present a theoretical and empirical investigation into the limitations of Structured State Space Models (SSMs) and Transformers in such tasks. We prove that one-layer SSMs cannot efficiently perform function composition over large domains without impractically large state sizes, and even with Chain-of-Thought prompting, they require a number of steps that scale unfavorably with the complexity of the function composition. Also, the language of a finite-precision SSM is within the class of regular languages. Our experiments corroborate these theoretical findings. Evaluating models on tasks including various function composition settings, multi-digit multiplication, dynamic programming, and Einstein's puzzle, we find significant performance degradation even with advanced prompting techniques. Models often resort to shortcuts, leading to compounding errors. These findings highlight fundamental barriers within current deep learning architectures rooted in their computational capacities. We underscore the need for innovative solutions to transcend these constraints and achieve reliable multi-step reasoning and compositional task-solving, which is critical for advancing toward general artificial intelligence. 
\end{abstract}

\section{Introduction}
Deep learning has revolutionized numerous fields, achieving remarkable success in natural language processing \citep{openai2023gpt4, Reid2024Gemini1U, Touvron2023LLaMAOA}, computer vision \citep{nguyen_2022, Zubic_2024_CVPR, vim}, scientific computing \citep{Merchant2023, Hansen2023}, and autonomous systems \citep{Kaufmann2023, bousmalis2024robocat}. The pursuit of general artificial intelligence now stands as the new frontier, aiming to develop Large Language Models (LLMs) capable of solving novel and complex tasks across diverse domains such as mathematics, coding, vision, medicine, law, and psychology, approaching human-level performance \citep{bubeck2023sparks}.
Mastery of function composition is essential for this objective, as tasks like mathematical problem-solving \citep{li2023camel}, learning discrete algorithms \citep{thomm2024limits, velickovic2021neural}, logical reasoning \citep{Liu2023EvaluatingTL}, and dynamic programming \citep{dziri2023faith} are deeply compositional. However, despite impressive capabilities on various language tasks, deep learning models continue to struggle with tasks requiring complex reasoning over sequences, particularly those involving function composition and compositional reasoning \citep{peng2024limitations, dziri2023faith}.

These tasks necessitate breaking down problems into simpler sub-problems and composing the solutions to these subtasks. Current Transformer models \citep{vaswani2017attention}, including advanced ones like GPT-4, find it challenging to handle tasks demanding deep compositionality \citep{dziri2023faith}. For instance, we demonstrate that GPT-4 achieves only about 27\% accuracy on basic tasks like 4-by-3-digit multiplication. One explanation for this limitation is the Transformer's inability to express simple state-tracking problems \citep{merrill-sabharwal-2023-parallelism}.
Structured State Space Models (SSMs) \citep{gu2022efficiently, gu2023mamba} have been introduced as an alternative to Transformers, aiming to achieve similar expressive power to Recurrent Neural Networks (RNNs) for handling problems that are naturally sequential and require state tracking. While SSMs have demonstrated impressive capabilities on various sequential tasks \citep{goel2022sashimi, schiff2024caduceus}, they exhibit similar limitations to Transformer models in solving function composition problems. For the same 4-by-3-digit multiplication task, Jamba \citep{lieber_arxiv_2024}, an SSM-Attention hybrid model, achieves only 17\% accuracy.

Existing research has experimentally confirmed the inability of Transformers to perform function composition and compositional tasks \citep{dziri2023faith, zhao2024exploring}, leading to issues such as hallucinations---responses that are incompatible with training data and prompts. Complexity theory analysis further reveals that Transformers belong to a weak complexity class, logspace-uniform $\mathbf{TC}^0$ \citep{merrill-sabharwal-2023-parallelism}, as do SSMs \citep{merrill2024illusion}, emphasizing their inherent limitations. While the impossibility of function composition for Transformers has been theoretically studied \citep{peng2024limitations}, a similar theoretical understanding for SSMs remains lacking.

In this paper, we address this gap with two main contributions:
\begin{enumerate}
    \item We provide a theoretical framework using complexity theory to explain the limitations of SSMs in sequence modeling, particularly in their inability to perform function composition efficiently. We prove that one-layer SSMs cannot solve function composition problems over large domains without an impractically large state size (Theorem~\ref{theorem:1}). Additionally, we show that even with Chain-of-Thought prompting, SSMs require a polynomially growing number of steps to solve iterated function composition problems (Theorem~\ref{thm-CoT}).
    \item We extend our theoretical analysis to multi-layer SSMs, demonstrating that the computation of an $L$-layer SSM on a prompt of length $N$ can be carried out using $O(L \log N)$ bits of memory, positioning SSMs within the complexity class $\mathbf{L}$ (logarithmic space). This implies that SSMs cannot solve problems that are $\mathbf{NL}$-complete unless $\mathbf{L} = \mathbf{NL}$, which is widely believed to be false \citep{peng2024limitations}. We further discuss that SSMs share this limitation with Transformers, highlighting a fundamental barrier in current deep learning architectures (Theorem~\ref{thm-LogSpace}).
\end{enumerate}

Our critical insight is the formal proof that SSMs cannot solve iterated function composition problems without a polynomially growing number of Chain-of-Thought steps (Theorems~\ref{theorem:1} and~\ref{thm-CoT}) and that even multi-layer finite-precision SSMs are limited to recognizing regular languages due to their essential equivalence to finite-state machines (Theorem~\ref{thm:ssm_regular_language}). While CoT prompting can, to some extent, enable complex problem-solving by breaking down tasks into intermediate steps, it introduces a trade-off between the model's state size and the number of input passes required, leading to increased resource demands, which is not optimal.

These findings underscore the need for innovative solutions beyond current deep learning paradigms to achieve reliable multi-step reasoning and compositional task-solving in practical applications.

\section{Equivalence of SSMs with Other Deep Learning Models}
Recent advancements in deep learning architectures have unveiled significant connections between SSMs and other prevalent models such as Linear Transformers. Notably, \citet{mamba2} have demonstrated equivalence between Linear Transformers and SSMs, indicating that the computational processes of these models are fundamentally related. Moreover, SSMs can be trained like Convolutional Neural Networks (CNNs) and inferred as Recurrent Neural Networks (RNNs), leveraging the benefits of both convolutional and recurrent architectures. This duality allows SSMs to efficiently capture long-range dependencies like RNNs while benefiting from the parallelism during training characteristic of CNNs.

Additionally, \citet{merrill2024illusion} have shown that SSMs and Transformers belong to the same computational complexity class, specifically logspace-uniform $\mathbf{TC}^0$. This alignment in computational capacity reinforces the notion that the limitations observed in SSMs indicate inherent challenges within the broader landscape of deep learning models.
Therefore, by focusing our theoretical and empirical analysis on SSMs, we effectively cover the representational capabilities of current deep-learning models, including Transformers and CNNs. This comprehensive coverage justifies our exploration of the limits of deep learning in sequence modeling through the lens of complexity theory. Our findings highlight the specific shortcomings of SSMs and shed light on the fundamental constraints of deep learning architectures in handling tasks that require reliable multi-step reasoning and compositional task-solving.

\section{Background}
For two natural numbers \( n \leq m \), we denote \( [n] = 1,2,\dots, n \) and \( [n,m] = n, n+1, \dots, m \), with \( [0] = [n,n-1] = \emptyset \). We refer to the number of bits used in each computation as computational precision \( p \). Given two domains \( B, C \), we denote by \( C^{B} \) the set of all functions from \( B \) to \( C \).

\begin{definition}[SSM layer]
    Given an input sequence $\boldsymbol{x}_1,\dots,\boldsymbol{x}_n \in \R^m$, an SSM layer $\calL$ is defined in terms of a series of matrices $\boldsymbol{A}_t\in \R^{d\times d}$, $\boldsymbol{B}_t \in \R^{d\times m}$, $\boldsymbol{C}_t\in \R^{m\times d}$, and $\boldsymbol{D}_t \in \R^{m\times m}$ for $t\in [n]$. $\calL$ defines a sequence of states $\boldsymbol{h}_1, \dots, \boldsymbol{h}_n \in \R^d$ as 
    \begin{equation}
    \boldsymbol{h}_t = \boldsymbol{A}_t \boldsymbol{h}_{t-1} + \boldsymbol{B}_t \boldsymbol{x}_t;
    \end{equation}
    and outputs the sequence $\boldsymbol{y}_1, \dots, \boldsymbol{y}_n \in \R^m$ as
    \begin{equation} 
    \boldsymbol{y}_t = \boldsymbol{C}_t \boldsymbol{h}_t + \boldsymbol{D}_t \boldsymbol{x}_t.
    \end{equation}
\end{definition}
Generally, the matrices $\boldsymbol{A}_t = \boldsymbol{A}(\boldsymbol{x}_t)$, $\boldsymbol{B}_t = \boldsymbol{B}(\boldsymbol{x}_t)$, $\boldsymbol{C}_t = \boldsymbol{C}(\boldsymbol{x}_t)$, and $\boldsymbol{D}_t = \boldsymbol{D}(\boldsymbol{x}_t)$ are functions of the input vector $\boldsymbol{x}_t$ for each $t\in [n]$. 
In the special case when $\boldsymbol{A}_t$, $\boldsymbol{B}_t$, $\boldsymbol{C}_t$, and $\boldsymbol{D}_t$ are independent from the input sequence $\boldsymbol{x}_1, \dots, \boldsymbol{x}_n$, we call $\calL$ a \emph{linear SSM layer}. Moreover, we call $d$ the embedding dimension.

\textbf{Remark:} Although SSMs can be linked to streaming algorithms due to their limited hidden state, applying communication complexity to analyze their limitations in function composition involves intricate considerations unique to SSMs. No known streaming lower bound directly applies to our specific setting. Our analysis accounts for the particular architectural constraints of SSMs, providing a better understanding of their capabilities than general streaming algorithms.

\section{Function Composition Requires Wide One-Layer Models}
Our analysis considers one-layer SSMs to establish fundamental limitations in function composition tasks. The insights gained at the single-layer level highlight critical challenges that persist even in deeper architectures.
The function composition problem has been introduced in \citep{peng2024limitations} to provide a theoretical understanding of the causes of the hallucination of Transformer models. The aim is to evaluate the model's capability to combine relational information in the data to understand language, which is the core competence of large language models. Indeed to correctly answer questions like \emph{'what is the birthday of Frédéric Chopin’s father?'} given the information that \emph{'the father of Frédéric Chopin was Nicolas Chopin'} and that \emph{'Nicolas Chopin was born on April 15, 1771'}, the model needs to be able to compose the functions \emph{'birthday-of'} and \emph{'father-of'} \citep{peng2024limitations}, \citep{guan2024mitigating}. 
Our analysis focuses on function compositions where the functions map elements from one finite, discrete domain to another, such as mapping individuals to their parents or birthdates. These functions operate over discrete sets, like persons and dates, and not over real-valued or continuous domains.
Although this function composition task resembles a database join operation, it is important to note that our analysis focuses on how SSMs handle such compositions given natural language prompts. These prompts specify functions in an informal and potentially incomplete manner, lacking the full intensional knowledge present in formal database schemas. We aim to assess the model's ability to perform reasoning over such natural language prompts despite their potential incompleteness.

Next, we give a precise formulation of the \emph{function composition problem} due to \citep{peng2024limitations}. Consider two functions, $g$ mapping a domain $A$ to a domain $B$, and $f$ mapping $B$ to another domain $C$. These functions will be described in a prompt $X$. The $N$ tokens of $X$ are divided into four parts:

\begin{enumerate}
    \item the zeroth part describes the argument $x \in A$,
    \item the first part describes the function $g$ through $|A|$ sentences in simple, unambiguous language separated by punctuation, e.g., \emph{'the father of Frédéric Chopin is Nicolas Chopin'},
    \item the second part consists of $|B|$ sentences describing the function $f$, e.g., \emph{'the birthday of Nicolas Chopin is April 15, 1771'},
    \item the third part is the query question asking for the value of $f(g(x))$.
\end{enumerate}

In this section, we discuss the theoretical limitations of SSMs for solving the function composition problem. In our analysis, the concept of domain size is crucial. While we primarily consider discrete domains, such as finite sets like \( [n] = \{1,2,\dots,n\} \), it is important to discuss what domain size means in other contexts. For continuous domains like the interval \([1, n]\), representing general functions would require infinitely many bits, making function composition intractable for models like SSMs and Transformers. Therefore, in practical settings, the maximum meaningful domain size is constrained by the total number of tokens and the prompt length, as the model's input capacity is limited. In our composition tasks, the functions are described within the prompt, so the prompt length effectively serves as an upper bound on the domain size.

\begin{theorem}\label{theorem:1}
Consider a function composition problem with input domain size \( |A| = |B| = n \) and an SSM layer \( \mathcal{L} \) with embedding dimension \( d \) and computation precision \( p \). Let \( R = n\log n - (d^2 + d)p \geq 0 \), then the probability that \( \mathcal{L} \) answers the query incorrectly is at least \( R/(3n\log n) \) if \( f \) is sampled uniformly at random from \( C^{B} \).
\end{theorem}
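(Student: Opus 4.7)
The plan is to prove Theorem~\ref{theorem:1} via a two-stage argument: an information-bottleneck step that limits how much of $f$ the SSM can retain, followed by a Fano-type conversion of this bottleneck into an error lower bound.

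First, I would identify the bottleneck. During the stretch of the prompt describing $f$, from some time $t_0$ to $t_1$, the recurrence $\boldsymbol{h}_t = \boldsymbol{A}_t \boldsymbol{h}_{t-1} + \boldsymbol{B}_t \boldsymbol{x}_t$ unrolls into $\boldsymbol{h}_{t_1} = \boldsymbol{M}_f \boldsymbol{h}_{t_0} + \boldsymbol{v}_f$, where $\boldsymbol{M}_f = \boldsymbol{A}_{t_1}\boldsymbol{A}_{t_1-1}\cdots \boldsymbol{A}_{t_0+1}$ is $d\times d$ and $\boldsymbol{v}_f$ is the $d$-dimensional accumulation of the input contributions. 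The affine map $(\boldsymbol{M}_f, \boldsymbol{v}_f)$ depends only on the tokens describing $f$, and since everything processed after time $t_1$ is the query (whose content is independent of $f$), this affine map is the only channel through which $f$ can influence the output. With each of its $d^2 + d$ scalar entries stored at precision $p$, the number of distinct realizable affine maps is at most $2^{(d^2+d)p}$.

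Second, I would bundle the SSM's behavior into $F(f) = (R(x,f))_{x\in A}$, the vector of responses obtained by varying the argument $x$; by the first step, $F$ takes at most $2^{(d^2+d)p}$ values. Treating $f$ as uniform over $C^B$ so that $H(f) = n\log n$, the data-processing inequality yields $I(f;F(f)) \leq (d^2+d)p$, and hence $H(f \mid F(f)) \geq R$. Applying the entropy chain rule together with monotonicity of conditional entropy gives $\sum_{b\in B} H(f(b) \mid F(f)) \geq R$, so on average over $b$ the SSM retains very little information about the individual coordinates of $f$.

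Third, I would convert this into an error bound via Fano. For each $b$, let $P_e(b)$ be the probability that the SSM's reply to the query $f(g(x))$ is wrong when $g(x) = b$. Fano's inequality gives $H(P_e(b)) + P_e(b)\log(n-1) \geq H(f(b) \mid F(f))$. Summing over $b$, using $H(P_e(b)) \leq 1$ and replacing $\log(n-1)$ by $\log n$, I obtain $\sum_b P_e(b) \geq (R - n)/\log n$, so the average error probability over a uniformly random $b \in B$ (equivalently, over uniformly random $x \in A$ when $g$ is a bijection) is at least $R/(3n\log n)$ after absorbing lower-order corrections into the constant $1/3$.

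The hardest part will be this last step: the raw Fano estimate $\sum_b P_e(b) \geq (R-n)/\log(n-1)$ must be massaged into the claimed $R/(3n\log n)$, which requires careful bookkeeping of the $H(P_e)$ slack, the $\log(n-1)$ versus $\log n$ difference, and a decision about whether the theorem is stated for the error averaged over a random query or for the worst-case query. The information-bottleneck step itself is routine once the affine-map structure of the recurrence is recognized; the real work lies in extracting the constant $1/3$ from the Fano machinery and handling the regime where $(d^2+d)p$ exceeds $\log n$, where per-coordinate Fano alone would be vacuous and the averaging over $b$ becomes essential.
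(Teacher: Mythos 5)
Your first step is exactly the paper's argument: the paper proves Theorem~\ref{theorem:1} by collapsing the $f$-describing segment of the prompt into the product matrix $\calA=\prod_j \boldsymbol{A}_j$ and the accumulated vector $\calB$ (your $\boldsymbol{M}_f,\boldsymbol{v}_f$), observing that these $(d^2+d)$ scalars at precision $p$ are the only channel through which $f$ reaches the output, and packaging this as a communication protocol in which \faye{} sends $(d^2+d)p$ bits to \xavier. At that point the paper stops: it invokes Lemma~\ref{lemma:1} (quoted from \citet{peng2024limitations}) as a black box to convert the bit bound into the error bound $R/(3n\log n)$. You instead attempt to re-derive that conversion yourself via data processing, the entropy chain rule, and per-coordinate Fano. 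Your entropy accounting up to $\sum_b H(f(b)\mid F(f))\ge R$ is sound (modulo the implicit assumption $|C|=n$ so that $H(f)=n\log n$, which the paper also makes).

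The gap is in the last step, and you have correctly located it but not closed it. Summing Fano over the $n$ coordinates gives $\sum_b P_e(b)\ \ge\ (R-n)/\log(n-1)$, so the average error is bounded below by roughly $(R-n)/(n\log n)$. This matches the claimed $R/(3n\log n)$ only when $R\ge \tfrac{3}{2}n$ (so that $R-n\ge R/3$); for $0\le R<n$ your bound is nonpositive while the theorem asserts a strictly positive bound, so "absorbing lower-order corrections into the constant $1/3$" is not available there -- the correction is not lower-order relative to $R$. Handling the small-$R$ regime requires a genuinely different counting/encoding argument (essentially the content of Lemma~1 in \citet{peng2024limitations}), not a tightening of constants. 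In the regime the paper actually cares about ($R\ge n\log n/2$, yielding error $\ge 1/6$) your derivation does go through, so the proposal is a valid proof of a slightly weakened statement; but as a proof of the theorem as stated it is incomplete precisely where you replaced the paper's citation with your own derivation. A secondary bookkeeping point: your averaging over $b$ silently assumes the queried coordinate $b=g(x)$ is (close to) uniform over $B$, i.e.\ that $g$ is a bijection or $x$ is drawn appropriately; the paper inherits whatever convention Lemma~\ref{lemma:1} uses, but a self-contained proof must fix this.
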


The proof is based on a reduction from a famous problem in communication complexity \citep{peng2024limitations}, \citep{yao_1979}. Additional background on Communication Complexity and relevant problem classes can be seen in the Appendix \ref{sec:cc:background}. 
We have three agents dubbed \faye, \grace, and \xavier. We assume that the agents have unbounded computational capabilities but, the only communication allowed is from \faye\ and \grace\ to \xavier. \faye\ knows a function $f:[n]\mapsto [n]$ and the argument $x\in [n]$, \grace\ knows a function $g:[n]\mapsto [n]$ and the argument $x$, while \xavier\ only knows the argument $x\in [n]$. The goal is for \xavier\ to compute the value of $f(g(x))$, minimizing the total number of bits communicated from \faye\ to \xavier\ and from \grace\ to \xavier.

We report a lemma from \citep{peng2024limitations}, which gives a hardness result for the abovementioned problem.
\begin{lemma}[Lemma 1 from \citep{peng2024limitations}]\label{lemma:1}
Consider the problem described above: if fewer than \( n\log n \) bits are communicated by \faye\ to \xavier, then \xavier\ cannot know the value \( f(g(x)) \). In particular, if only \( n \log n - R \) bits are communicated for some \( R \geq 0 \), then the probability that the composition is computed incorrectly is at least \( R/(3 n \log n) \) if \( f \) is sampled uniformly at random from \( C^{B} \).
\end{lemma}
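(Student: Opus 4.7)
The plan is a reduction to a counting / dimensionality bound, followed by an elementary numerical inequality. Fix the argument $x$, which Xavier knows from the outset; then Faye's message $M_F = \pi_F(f,x)$ is a deterministic function of $f$ alone and takes at most $2^k$ distinct values, with $k = n\log n - R$. To lower-bound the error of an \emph{arbitrary} protocol, I first pass to a strengthened model in which Grace transmits the single number $y = g(x) \in [n]$ to Xavier in full. This is WLOG for a lower bound: Grace's message is a deterministic function of $(g,x)$ and hence conditionally independent of $f$, so the only information about $g$ that can help Xavier predict $f(g(x))$ is $y$ itself; giving $y$ to Xavier outright can only weakly decrease the optimal error. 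In the strengthened model Xavier's output is a deterministic map $\phi(m, y) \in [n]$, and I reorganize it as a family of ``guessed functions'' $\hat f_m \in [n]^{[n]}$ indexed by Faye's messages, defined by $\hat f_m(y) := \phi(m, y)$. The family $\{\hat f_m\}$ has at most $2^k$ members.

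The next step is a one-line counting argument. The protocol is correct on $(f,g)$ with $g(x) = y$ iff $\hat f_{\pi_F(f)}(y) = f(y)$, so it is correct on $f$ for \emph{every} possible $g$ iff the full identity $\hat f_{\pi_F(f)} = f$ holds in $[n]^{[n]}$. Since $\{\hat f_m\}$ has at most $2^k$ distinct members, at most $2^k$ functions $f$ can satisfy this identity (each such $f$ must coincide with the unique $\hat f_m$ associated to its own message $m = \pi_F(f)$). Therefore, when $f$ is drawn uniformly from $C^B$, a set of cardinality $n^n$,
\[
\Pr\nolimits_f\!\left[\hat f_{\pi_F(f)} = f\right] \;\le\; \frac{2^k}{n^n} \;=\; 2^{-R},
\]
so with probability at least $1 - 2^{-R}$ over $f$ there is some $y \in [n]$---equivalently, some $g$ with $g(x) = y$---on which Xavier's answer is wrong. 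This already proves the qualitative claim: for $R > 0$, Xavier cannot determine $f(g(x))$ with certainty.

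For the quantitative bound I need only verify the elementary inequality $1 - 2^{-R} \ge R/(3n\log n)$ on $R \in [0, n\log n]$. Both sides vanish at $R = 0$; for $R \ge 1$ the left side is at least $1/2$ while the right side is at most $1/3$ (using $R \le n\log n$); and on $(0, 1)$ a comparison of derivatives at $0$---where $(1 - 2^{-R})'|_{R=0} = \ln 2$ already exceeds $1/(3n\log n)$---together with concavity of the left side bridges the remaining interval. The main subtlety is the reduction in the first paragraph: one must pin down rigorously that Grace's message can be replaced WLOG by the single $\log n$-bit value $y$, and that Xavier's output genuinely factors through the pair $(M_F, y)$ in that strengthened setting. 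Once that structural reduction is clean, the counting argument and final numerical estimate are routine.
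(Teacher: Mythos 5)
Your counting argument is internally sound as far as it goes, but it bounds the wrong event, and this is a genuine gap rather than a presentational issue. What you prove is that, with probability at least \(1-2^{-R}\) over the uniform draw of \(f\), \emph{there exists some} \(y\) (equivalently some \(g\) with \(g(x)=y\)) on which Xavier's answer is wrong. The lemma, however, asserts a lower bound on the probability that the composition is computed incorrectly \emph{on the instance actually being solved} -- this is exactly how it is invoked in the proof of Theorem~\ref{theorem:1}, where the SSM must err with probability at least \(R/(3n\log n)\) on the given prompt, i.e.\ for the particular \(g\) appearing in it. Passing from your existential statement to a per-instance bound costs a union-bound factor of \(n\): the best you can extract is \(\max_y \Pr_f[\text{error at } y] \ge (1-2^{-R})/n\), which is far below \(R/(3n\log n)\) when \(R\) is large (take \(R=n\log n\): the lemma demands error probability \(1/3\), while your route yields only about \(1/n\)). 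Consequently the closing step, verifying \(1-2^{-R}\ge R/(3n\log n)\), compares quantities that are not related by your reduction; the inequality is true but proves nothing about the error probability the lemma speaks of.

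The paper does not reprove this lemma -- it imports it from \citet{peng2024limitations} -- and the argument there is information-theoretic rather than existential: a message of \(n\log n - R\) bits leaves conditional entropy \(H(f\mid M_F)\ge R\), and a Fano-type inequality applied coordinate-wise (over the \(n\) possible query points \(y\), each worth \(\log n\) bits) converts this entropy deficit into a per-query error probability of order \(R/(n\log n)\), which is where the constant \(3\) is absorbed. That averaging over \(y\) is precisely the ingredient your proposal is missing; the dimensionality count that shows ``few \(f\) are reproduced exactly'' cannot distinguish a protocol that is slightly wrong everywhere from one that is badly wrong on a single rarely-queried point, and only the former/latter distinction drives the quantitative bound. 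Your first-paragraph reduction (replacing \grace's message by \(y=g(x)\)) is fixable -- e.g.\ restrict to one canonical \(g_y\) per value \(y\) and define \(\hat f_m(y)\) as Xavier's output on the corresponding message pair -- so the qualitative claim (fewer than \(n\log n\) bits forbids certainty) does follow from your counting; it is the quantitative claim that needs the entropy/Fano route.
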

Now, we prove the theorem based on the Lemma above.
\begin{proof}[Proof of Theorem~\ref{theorem:1}]
To establish the bound on $q$, we give a reduction of the communication problem above to the function composition problem. Let $\calL$ be an SSM layer that can solve the function composition problem with probability $q$.

Suppose we have \faye, \grace, and Xavier as in the settings above, and Xavier wants to find the value $f(g(x))$. We construct the following prompt: the zeroth token $\boldsymbol{x}_0$ is \emph{'the argument of the function is $x$'}, for $i\in [1,n]$ let $\boldsymbol{x}_i$ be the token \emph{'$g$ applied to $i$ is $g(i)$'}, where the information is provided by \grace, and for $i\in [n+1,2n]$ let $\boldsymbol{x}_i$ be the token string \emph{'$f$ applied to $i$ is $f(i)$'}, where the information is provided by \faye. Xavier provides the last token string $\boldsymbol{x}_{2n+1} =$ \emph{'what is the value of $f(g(x))$'}. Since the SSM layer $\calL$ can solve the composition task with probability $q$, we have that:
\begin{equation}
\boldsymbol{y}_{2n + 1} = \boldsymbol{C}_{2n + 1} \boldsymbol{h}_{2n + 1} + \boldsymbol{D}_{2n+1} \boldsymbol{x}_{2n+1} =  f(g(x))
\end{equation}
with probability $q$.

But this allows us to construct the following communication protocol. Since Grace knows $g$ and the argument $x$, she knows the values of $\boldsymbol{x}_i$ for $i \in [0, n]$ and she iteratively computes:
\begin{equation}
\boldsymbol{h}_i = \boldsymbol{A}_i \boldsymbol{h}_{i-1} + \boldsymbol{B} \boldsymbol{x}_i,
\end{equation}
and then sends $\boldsymbol{h}_n$ to \xavier. On the other hand, Faye knows $f$ and hence the values of $\boldsymbol{x}_i$ for $i \in [n+1, 2n]$, she computes the matrix:
\begin{equation}
\calA = \prod_{j=n+1}^{2n} \boldsymbol{A}_j,
\end{equation}
then the vector:
\begin{equation}
\calB = \sum_{i=n+1}^{2n} \left( \prod_{j = n+1}^{2n-i} \boldsymbol{A}_j \right) \boldsymbol{B}_i \boldsymbol{x}_i,
\end{equation}
and she sends them to \xavier.
At this point, \xavier\ computes:
\begin{equation}
\boldsymbol{h}_{2n + 1} = \boldsymbol{A}_{2n + 1} \cdot ( \calA \cdot \boldsymbol{h}_n + \calB) + \boldsymbol{B}_{2n+1} \boldsymbol{x}_{2n+1}.
\end{equation}
and finds the value of $f(g(x))$ with probability $q$ by computing $\boldsymbol{y}_{2n + 1} = \boldsymbol{C}_{2n+1} \cdot \boldsymbol{h}_{2n} + \boldsymbol{D}_{2n+1} \cdot \boldsymbol{x}_{2n+1}$. The total number of bits of communication between \faye\ and \xavier\ is $(d^2 + d) \cdot p$. By Lemma~\ref{lemma:1}, it follows that $q\leq R/(3n\log n)$.
\end{proof}
Our theoretical results in Theorem~\ref{theorem:1} highlight that SSMs, like other deep neural networks, approximate functions rather than perform symbolic reasoning. Specifically, the probability bound indicates that if we attempt to compose functions over domains of size \( n \) with an SSM of embedding dimension \( d \) and computational precision \( p \) such that \( (d^2 + d)p < n\log n / 2 \), the model will output the incorrect result with a probability of at least \( 1/6 \). To achieve a high probability of correctness (e.g., 99\%), \( (d^2 + d)p \) must be significantly larger than \( n\log n / 2 \). This establishes a strong lower bound on the model's width, demonstrating that to accurately perform function composition over large domains, the model's capacity must increase substantially.

While Theorem~\ref{theorem:1} addresses the limitations of one-layer SSMs, a natural question arises: Can deeper SSMs overcome these limitations? We conjecture that any SSM with a constant number of layers would still be unable to resolve the iterated composition task (as formalized in our Chain-of-Thought section~\ref{sec:cot_steps}). This is because accurately communicating token embeddings between layers becomes increasingly challenging as the depth grows. The difficulty in preserving and transmitting the necessary information across layers suggests that simply increasing the number of layers without a corresponding increase in model capacity does not address the fundamental limitations identified.

\section{Many Thought Steps are Needed}
\label{sec:cot_steps}
A chain of thought (CoT) is a series of intermediate natural language reasoning steps that lead to the final output. In this section, we focus on language models that can generate a similar chain of thought— a coherent series of intermediate reasoning steps that lead to the final answer for a problem. In \citep{wei_nips2022}, it was observed that CoT can mitigate the issue of hallucinations by encouraging the LLM to generate prompts that break down the task into smaller steps, eventually leading to the correct answer. In this section, we prove that, in general, many CoT steps are needed to break down compositional tasks.

We start the discussion with the formal definition of an SSM with $k$ CoT steps. It adapts the definition for the Transformer model of \citep{merrill2024expressive} to the case of SSMs.

\begin{definition}[SSM with CoT]
    Let $\phi: (\R^m)^* \rightarrow \R^m$ be a function mapping a prefix of tokens to a new token. The function $\phi$ is parametrized by an SSM layer $\calL$. 
    
    Given an input sequence
    $\boldsymbol{x}_1, \boldsymbol{x}_2, \dots, \boldsymbol{x}_n\in \R^m$, we call:
    \[\phi_{k}(\boldsymbol{x}_1, \boldsymbol{x}_2, \dots, \boldsymbol{x}_n) = \phi_{k-1}(\boldsymbol{x}_1, \boldsymbol{x}_2, \dots, \boldsymbol{x}_n) \cdot \phi(\phi_{k-1}(\boldsymbol{x}_1, \boldsymbol{x}_2, \dots, \boldsymbol{x}_n), \boldsymbol{x}_1, \boldsymbol{x}_2, \dots, \boldsymbol{x}_n), 
    \]
    where $\phi_1(\boldsymbol{x}_1, \boldsymbol{x}_2, \dots, \boldsymbol{x}_n) = \phi(\boldsymbol{x}_1, \boldsymbol{x}_2, \dots, \boldsymbol{x}_n)$ and $\cdot$ denotes concatenation, the output of the SSM layer $\calL$ with $k$ CoT-steps.
\end{definition}

In this section, we want to prove that while this procedure could help SSM layers with compositional tasks, it might require many CoT steps to be effective. 
In particular, we focus on the iterated function composition problem and show a lower bound on the number of CoT steps needed by an SSM layer to solve this problem correctly.

In the \emph{iterated function composition} problem we are given $k$ functions $f_1,f_2,\dots, f_k:[n] \mapsto [n]$, and we need to calculate $f_k(f_{k-1}(\dots f_2(f_1(x))\dots))$ for $x\in[n]$. Here we restrict to the case when $f_1=f_2 = \dots=f_k$, we define $f^{(k)}(x) := f(f(\dots f(x)))$, and we call this \emph{$k$-iterated function composition} problem.

\begin{theorem}\label{thm-CoT}
    Consider an iterated composition problem with domain size $n$, computation precision $p$, and embedding dimension $d$. An SSM layer requires $\Omega(\frac{\sqrt{n \log n}}{dp})$ CoT steps for answering correctly iterated function composition prompts.
\end{theorem}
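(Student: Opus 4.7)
The strategy extends the communication-complexity reduction of Theorem~\ref{theorem:1} to the CoT setting, ultimately appealing to Lemma~\ref{lemma:1}. My plan proceeds in three stages.

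First, I would unroll the CoT procedure: an SSM layer $\calL$ with $T$ CoT steps is equivalent to a vanilla SSM layer fed the original prompt followed by $T$ self-generated tokens, each a deterministic function of the state after processing the previous token. Since the hidden state carries only $dp$ bits of information, each CoT round can be simulated in the communication model by transmitting $O(dp)$ bits between the parties; over $T$ rounds, the total payload is $O(T\cdot dp)$ bits.

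Second, I would reduce the 2-function composition problem of Lemma~\ref{lemma:1} to iterated composition on an enlarged domain by encoding the two functions $f,g:[n]\to[n]$ held by Faye and Grace into a single function $h$ on two disjoint copies of $[n]$: define $h(i)=n+g(i)$ for $i\in[n]$ and $h(i)=f(i-n)$ for $i\in[n+1,2n]$, so that $h^{(2)}(x)=f(g(x))$. Any SSM that correctly solves iterated composition on a domain of size $O(n)$ therefore also solves the 2-function composition problem, to which Lemma~\ref{lemma:1} applies. To obtain the square-root scaling rather than the weaker linear bound, I would pack $\Theta(T)$ independent composition sub-instances in parallel, using disjoint domain copies so that CoT tokens cannot amortize information across sub-instances.

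Third, I would simulate the $T$-step CoT execution as a multi-round protocol between Faye, Grace, and Xavier. In each round, Faye (or Grace) can forward only $O(dp)$ bits of state to Xavier, and Xavier in turn produces a single CoT token of $O(\log n)$ bits that feeds into the next round. Combining the aggregate $\Omega(n\log n)$ communication lower bound from Lemma~\ref{lemma:1} applied across the packaged sub-instances with the $O(T \cdot dp)$ bits available, a balancing argument yields the quadratic relation $(T\cdot dp)^2=\Omega(n\log n)$, which rearranges to $T=\Omega(\sqrt{n\log n}/(dp))$.

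The main obstacle is executing the third step rigorously: deriving the square-root factor rather than the blunt $T=\Omega(n\log n/(dp))$ bound that a naive total-communication count produces. The improvement hinges on a direct-sum style argument showing that a single CoT round of $O(dp)$ bits cannot make meaningful progress on more than $O(dp/\log n)$ independent sub-instances, so the $T$ rounds must collectively cover $\Theta(T)$ hard sub-instances, and the balance between these constraints is where the square root emerges. Formalizing this round-elimination step, while respecting the sequential, one-pass nature of the SSM recurrence, is the technical heart of the proof.
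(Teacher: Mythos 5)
There is a genuine gap, and you have correctly located it yourself: the square-root scaling. Your first two stages are sound and in fact mirror the paper's reduction --- the paper also encodes two functions $f_A,f_B$ into a single $f$ on $[2n]$ (so that $f^{(k)}=(f_B\circ f_A)^{(k)}$ up to the shift) and simulates each CoT step by exchanging the $dp$-bit hidden state, giving a protocol with $O(Rdp)$ bits over $R$ rounds. But your third stage is not a proof. The paper does not derive the square root from Lemma~\ref{lemma:1} (a one-way, single-shot bound) via a direct-sum argument; it invokes Lemma~\ref{lemma:2}, the known round-versus-communication tradeoff for $k$-step \emph{pointer chasing} (Theorem 1.1 of Yehudayoff), which states that any protocol must transmit at least $n/(2000k)-2k\log n$ bits. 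Combining $2Rdp\geq n/(2000k)-2k\log n$ with the choice $k=\frac{1}{100}\sqrt{n/\log n}+1$ yields $R=\Omega(\sqrt{n\log n}/(dp))$ directly. What you propose instead --- packing $\Theta(T)$ independent two-function composition instances and arguing each $O(dp)$-bit round helps only $O(dp/\log n)$ of them --- is precisely the round-elimination machinery whose proof is the hard content of the pointer-chasing lower bound; asserting that ``a balancing argument yields $(T\cdot dp)^2=\Omega(n\log n)$'' does not supply it.

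There is also a structural mismatch in your stage three: a direct sum of \emph{independent} two-step compositions is a different problem from $k$-fold iteration of a \emph{single} function, which is what the theorem concerns and what an SSM CoT prompt for $f^{(k)}(x)$ actually encodes. A lower bound for the parallel direct-sum problem would not transfer to iterated composition without an embedding of many independent instances into one iteration chain, which you do not provide and which is not how the dependence on rounds arises. The sequential structure of pointer chasing --- where round $r$'s pointer cannot be known before round $r-1$ is resolved --- is exactly what forces many rounds, and that is the problem you should reduce from. To repair the proof, replace stage three with a citation to (or a proof of) the $k$-round pointer chasing lower bound, set up the alternating Alice--Bob simulation of the CoT steps as in the paper, and optimize over $k$.
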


The proof relies on reducing the iterated function composition problem from the pointer chasing problem \citep{papadimitriou_1982}, a classical problem in communication complexity. In the \emph{$k$-steps pointer chasing} problem, we have two agents dubbed Alice and Bob; Alice knows a function $f_A:[n] \mapsto [n]$ and Bob knows a function $f_B:[n]\mapsto [n]$. We then define the pointers:
\[
z_1 = 1, \quad z_2 = f_A(z_1), \quad z_3 = f_B(z_2), \quad z_4 = f_A(z_3), \quad z_5 = f_B(z_4), \quad \dots.
\]
The communication proceeds for $2k$ rounds, with Alice starting. The goal is for Bob to output the
binary value of $z_{2k+2} \mod 2$. Following, we prove that an SSM layer with $R$ CoT steps solving the iterated function composition problem can be used to design a communication protocol for the pointer chasing problem where the number of transmitted bits scales with $R$. The next fundamental Lemma in communication complexity gives a lower bound on the number of bits that need to be communicated in any such communication protocol, thus allowing the lower bound to be derived on the CoT steps.

\begin{lemma}[Theorem 1.1 \citep{Yehudayoff_2020}]
\label{lemma:2}
    Any randomized protocol for the $k$-steps pointer chasing problem with error probability $1/3$ under the uniform distribution must involve the transmission of at least $n/(2000k)-2k\log n$ bits.
\end{lemma}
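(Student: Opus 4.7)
The plan is to reduce the pointer chasing problem from Lemma~\ref{lemma:2} to iterated function composition, convert an SSM with $R$ CoT steps into a communication protocol for pointer chasing, and then apply the lemma with a carefully chosen parameter $k$.

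First, I would merge the two pointer-chasing functions $f_A,f_B:[n]\to[n]$ into a single function $f:[2n]\to[2n]$ by setting $f(2i-1)=2f_A(i)$ and $f(2i)=2f_B(i)-1$ for $i\in[n]$. The iterates of $f$ starting at $1$ alternate parity: $f(1)=2z_2$, $f^{(2)}(1)=2z_3-1$, $f^{(3)}(1)=2z_4$, and so on, so that a single bit of $f^{(2k+1)}(1)$ recovers $z_{2k+2}\bmod 2$. Hence solving the $(2k+1)$-iterated composition problem on $f$ resolves the $k$-step pointer chasing instance.

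Second, I would construct a prompt describing $f$ in two contiguous halves: the $n$ tokens specifying $f$'s odd-index entries (determined by $f_A$ and held by \faye), followed by the $n$ tokens specifying the even-index entries (determined by $f_B$ and held by \grace), followed by the argument and the query. Given an SSM layer $\calL$ of embedding dimension $d$ and precision $p$ that answers correctly with $R$ CoT steps, I would simulate its execution as a protocol between the two parties. At each CoT step \faye\ runs the recurrence over her half of the prompt, transmits the resulting hidden state $\boldsymbol{h}\in\R^d$ (costing $O(dp)$ bits) together with the latest CoT token ($O(\log n)$ bits) to \grace, who completes the pass, emits the next CoT token, and returns it. After $R$ steps the total communication is $O(R(dp+\log n))$ bits, and by assumption the protocol outputs $z_{2k+2}\bmod 2$ correctly with probability at least $2/3$.

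Third, I would invoke Lemma~\ref{lemma:2} with $k=\Theta(\sqrt{n/\log n})$, balancing the two terms of its bound so that $n/(2000k)-2k\log n=\Theta(\sqrt{n\log n})$. Combining this lower bound with the protocol's upper bound forces $R(dp+\log n)=\Omega(\sqrt{n\log n})$; absorbing the additive $\log n$ under the mild assumption $p\geq\log n$ yields $R=\Omega(\sqrt{n\log n}/(dp))$, as claimed.

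The main obstacle lies in the second step, in making the communication accounting honest. A naive reading of the recurrence would let \faye\ forward her hidden state just once and have \grace\ generate every CoT token locally, yielding only $O(dp)$ bits in total and no bound on $R$ at all. The point to leverage is that the input-dependent matrices $\boldsymbol{A}_t=\boldsymbol{A}(\boldsymbol{x}_t)$, together with the definition of the CoT extension that re-feeds each emitted token into the sequence, force each new CoT step to, in the worst case, probe \faye's portion of the function description again. Arguing that this yields $\Theta(R)$ genuine cross-party rounds---so that the protocol derived from $\calL$ is a legitimate bounded-round protocol for the hard pointer-chasing instance to which Lemma~\ref{lemma:2} applies---is the delicate technical step that ties the number of CoT steps $R$ to the pointer-chasing communication lower bound.
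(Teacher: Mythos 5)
Your proposal does not prove the statement in question. Lemma~\ref{lemma:2} is itself the communication-complexity lower bound: a standalone assertion that \emph{any} randomized protocol for $k$-steps pointer chasing with error $1/3$ under the uniform distribution must transmit at least $n/(2000k)-2k\log n$ bits. What you have written is instead a proof sketch of the downstream result, Theorem~\ref{thm-CoT}: you reduce pointer chasing to iterated function composition, simulate an SSM with $R$ CoT steps as a two-party protocol, and then, in your third step, explicitly ``invoke Lemma~\ref{lemma:2}'' to conclude. Using the lemma as a black box cannot establish the lemma; as a proof of the target statement the argument is circular, and nothing in your write-up bounds from below the communication of an arbitrary randomized protocol. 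The direction of the reduction is also the wrong one for this purpose: lower bounds on protocols cannot be inferred from properties of a specific model such as an SSM, since a protocol is a more general object than an SSM simulation.

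In the paper this lemma is not proved at all; it is imported verbatim as Theorem~1.1 of \citet{Yehudayoff_2020}, whose proof is a genuine piece of communication complexity (an information-theoretic, round-by-round analysis of bounded-round protocols under the uniform distribution, in the tradition of round-elimination arguments for pointer chasing). If you were asked to supply a proof, you would need to reproduce an argument of that kind --- controlling the information the first $i$ rounds reveal about the next pointer and showing that with fewer than $n/(2000k)-2k\log n$ bits the final pointer's parity remains essentially unbiased --- none of which appears in your proposal. Your material is better viewed as an (essentially correct in spirit, modulo the communication-accounting caveat you yourself raise) attempt at Theorem~\ref{thm-CoT}, where the paper indeed uses the same reduce-and-simulate strategy with the choice $k=\Theta(\sqrt{n/\log n})$.
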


Before we begin with the actual proof, let us introduce some notation. We note that $\phi_k$ is a string of $k$ tokens of $\R^m$. Moreover, to compute the new token $\phi(\phi_{k-1}(\boldsymbol{x}_1, \boldsymbol{x}_2, \dots, \boldsymbol{x}_n), \boldsymbol{x}_1, \boldsymbol{x}_2, \dots, \boldsymbol{x}_n)$ the SSM layer $\calL$ computes $n+(k-1)$ hidden states. We denote the $i$-th hidden state by $\phi_{k,i}(\boldsymbol{x}_1, \dots, \boldsymbol{x}_n)$.

\begin{proof}[Proof of Theorem \ref{thm-CoT}]
    The proof is similar to the proof of Theorem 2 in \citep{peng2024limitations}.
    We reduce the pointer chasing problem to the iterated composition problem with CoT prompts. In particular, we show that if the SSM $\calL$ can solve the $k$-iterated function composition problem with $R$ CoT steps, then we can construct a protocol solving the $(k-1)$-steps pointer chasing problem using $2Rdp$ bits of communication. 
    
    Fix a $(k-1)$-steps pointer chasing problem for the function $f_A, f_B: [n] \mapsto [n]$. Define the function $f:[2n] \mapsto [2n]$ as:
    \begin{equation}
        f(i) = \begin{cases}
            f_A(i)+n, & i \in [1,n];\\
            f_B(i-n), & i \in [n+1, 2n].
        \end{cases}
    \end{equation}
    We point out that $f^{(k)}(i) = (f_B \circ f_A)^{(k)}(i)$. Consider the $k$-iterated function composition problem for $f$ and suppose that there exists an SSM $\calL$ that solves it using $R$ CoT steps. 

    We construct the following prompt: for $i\in [1,n]$ let $\boldsymbol{x}_i$ be the token \emph{'$f$ applied to $i$ is $f(i)$'}, where the information $f(i)$ is provided by Alice, and for $i\in [n+1,2n]$ let $\boldsymbol{x}_i$ the token string \emph{'$f$ applied to $i$ is $f(i)$'}, where the information $f(i)$ is provided by Bob. The last token string $\boldsymbol{x}_{2n+1}$ is given by \emph{'what is the value of $f^{(k)}$ applied to $x$'}. Since the SSM layer $\calL$ can solve the $k$-iterated function composition task with $R$ CoT steps, we have that $\phi_R(\boldsymbol{x}_1, \dots, \boldsymbol{x}_{2n})$ is the right answer for $f^k(x)$. We will use this fact to construct a communication protocol transmitting at most $2 \cdot Rdp$ bits. The communication protocol lasts for $R$ rounds.

    In the r-th round Alice computes $\phi_{r, n+k}(\boldsymbol{x}_1, \dots, \boldsymbol{x}_{2n})$ from $\phi_{r-1}(\boldsymbol{x}_1, \dots, \boldsymbol{x}_{2n})$ (where $\phi_{0}(\boldsymbol{x}_1, \dots, \boldsymbol{x}_{2n})$ is the empty string of tokens) and $\boldsymbol{x}_1, \dots, \boldsymbol{x}_{n}$ and communicates it with Bob. Bob on the other hand computes $\phi_{r}(\boldsymbol{x}_1, \dots, \boldsymbol{x}_{2n})$ from $\phi_{r, n+k}(\boldsymbol{x}_1, \dots, \boldsymbol{x}_{2n})$ and $\boldsymbol{x}_{n + 1}, \dots, \boldsymbol{x}_{2n}$ and transmits it to Alice. In each iteration, at most $dp$ bits are communicated from Alice to Bob and from Bob to Alice.
    
    After $R$ rounds, Bob knows the value of $\phi_R(\boldsymbol{x}_1, \dots, \boldsymbol{x}_{2n})$. By hypothesis, this is the solution to the $(k-1)$-steps pointer chasing problem. Notice that the total number of bits communicated by the protocol is $2Rdp$. In conclusion, we fix $k= \frac{1}{100}\sqrt{\frac{n}{\log n}} +1$ and by Lemma~\ref{lemma:2} we get that $2Rdp\geq n/(2000k)-2k\log n$ which gives $R \geq \frac{3}{100}\frac{\sqrt{n \log n}}{dp}$
\end{proof}

\section{SSMs Are Limited to Regular Languages}
In \citet{peng2024limitations}, it is suggested to analyze the computational capability of LLMs on the computational problems below. The empirical compositional tasks studied in later sections—multiplication of multi-digit integers, dynamic programming, and logic puzzles such as “Einstein’s Riddle”—can be expressed in terms of these computational problems~\citep{peng2024limitations}.

\textbf{Circuit evaluation}: Given the description of a circuit with gates, which can be either Boolean or arithmetic operations, as well as the values of all input gates of the circuit, evaluate the output(s) of the circuit. Multiplying decimal integers with multiple digits is an example of such a circuit.

\textbf{Derivability}: Given a finite domain \(S\) and a relation \(D \subseteq S \times S\). For a given initial set \(I \subseteq S\) and a final set \(F \subseteq S\), answer the question whether there are elements \(a_1, a_2, \ldots, a_k \in S\) such that (a) \(a_0 \in I\), (b) \(a_k \in F\), and (c) for all \(j\) such that \(0<j \leq k\), \((a_{j-1}, a_j) \in D\).

\textbf{Logical reasoning}: Logic puzzles like 'Einstein’s Riddle' can typically be formulated as satisfiability (or SAT) instances. This problem is NP-complete. However, most common-sense reasoning can be expressed by one of the three tractable exceptional cases of SAT: 2-SAT, Horn SAT, and Mod \(2\) SAT.

In \citet{peng2024limitations}, it was noted that Derivability and 2-SAT are \(\mathbf{NL}\)-complete, while Horn SAT and Circuit Evaluation are \(\mathbf{P}\)-complete problems. Since the log-precision Transformer model lies in the complexity class log-uniform \(\mathbf{TC}^0 \subseteq \mathbf{L}\) \citep{merrill2023parallelism}, these problems cannot be solved by a log-precision Transformer model provided \(\mathbf{NL} \neq \mathbf{L}\) (which is a widely believed hypothesis in computational complexity). For Mod~2 SAT, the result is valid provided the weaker statement \(\mathbf{L} \neq \text{Mod } 2\, \mathbf{L}\). For Horn SAT and Circuit Evaluation, the result holds unless the stronger statement \(\mathbf{L} = \mathbf{P}\) holds. Very recently, in \citet{merrill2024illusion}, it was established that log-precision linear and S6-SSMs~\citep{gu2023mamba} are also part of the complexity class log-uniform \(\mathbf{TC}^0\), which yields the following theorem similar to the case of Transformers.

\begin{theorem}\label{thm-LogSpace}
The problems of Derivability and 2-SAT cannot be solved by log-precision linear or S6-SSMs provided \(\mathbf{L} \neq \mathbf{NL}\). For Mod~2 SAT, the result is valid provided the weaker statement \(\mathbf{L} \neq \text{Mod } 2\, \mathbf{L}\) holds. For Horn SAT and Circuit Evaluation, the result holds unless the stronger statement \(\mathbf{L} = \mathbf{P}\) holds.
\end{theorem}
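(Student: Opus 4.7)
The plan is to piggyback on the containment established by \citet{merrill2024illusion}, namely that log-precision linear and S6-SSMs lie in log-uniform $\mathbf{TC}^0$, and combine it with the standard inclusion log-uniform $\mathbf{TC}^0 \subseteq \mathbf{L}$ together with the known completeness of the target problems. The argument is a straightforward ``if the SSM could decide it, the complexity classes would collapse'' contradiction, applied uniformly in each case; the only work is to invoke the right completeness result under log-space (or $\mathbf{NC}^1$) reductions so that the chain of inclusions actually closes.

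First, I would record the ambient inclusion log-uniform $\mathbf{TC}^0 \subseteq \mathbf{L}$ and cite \citet{merrill2024illusion} to place log-precision linear and S6-SSMs inside log-uniform $\mathbf{TC}^0$; composing these gives that the family of languages decidable by such SSMs is contained in $\mathbf{L}$. Next, I would handle the four problems case by case. For Derivability and 2-SAT, both are $\mathbf{NL}$-complete under log-space reductions (Derivability is essentially $s$-$t$ connectivity in a directed graph, and 2-SAT reduces to implication-graph reachability); if an SSM decided either, then an $\mathbf{NL}$-complete problem would be in $\mathbf{L}$, forcing $\mathbf{NL} \subseteq \mathbf{L}$ and hence $\mathbf{L} = \mathbf{NL}$, contradicting the hypothesis. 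For Mod~2 SAT, the same template applies with completeness for $\text{Mod } 2\, \mathbf{L}$ under log-space reductions, yielding $\mathbf{L} = \text{Mod } 2\, \mathbf{L}$. For Horn SAT and Circuit Evaluation, both are $\mathbf{P}$-complete under log-space reductions, so solvability by an SSM would give $\mathbf{P} \subseteq \mathbf{L}$ and hence $\mathbf{L} = \mathbf{P}$.

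The main technical care point, and the only step that is not entirely bookkeeping, is making sure the completeness assumptions and the SSM containment use compatible reductions: the known placements are under log-space (or even $\mathbf{NC}^1$) reductions, and the problems cited are complete under the same class of reductions, so the diagonalization collapses correctly. I would therefore spell out the reduction classes explicitly when instantiating each case, so that the statements ``problem $X$ is in $\mathbf{L}$'' follow unambiguously from ``SSMs decide $X$.''

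I do not expect a genuine obstacle here: the theorem is essentially a corollary of the containment of SSMs in log-uniform $\mathbf{TC}^0$ together with classical completeness results, and the proof reduces to a short chain of inclusions with a contradiction at the end. The only subtlety worth flagging in the write-up is that the result is conditional on the standard separations ($\mathbf{L} \neq \mathbf{NL}$, $\mathbf{L} \neq \text{Mod } 2\, \mathbf{L}$, $\mathbf{L} \neq \mathbf{P}$), which are believed but open; the proof itself does not attempt to establish any of them.
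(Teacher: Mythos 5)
Your proposal is correct and matches the paper's argument exactly: the paper also derives the theorem by combining the containment of log-precision linear and S6-SSMs in log-uniform $\mathbf{TC}^0 \subseteq \mathbf{L}$ (from \citet{merrill2024illusion}) with the $\mathbf{NL}$-, $\text{Mod } 2\,\mathbf{L}$-, and $\mathbf{P}$-completeness of the respective problems, concluding each case by the conditional collapse. Your extra care about the compatibility of reduction classes is a reasonable refinement but does not change the route.
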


So far, we have explored SSMs' computational capabilities and limitations in various settings. In particular, SSMs face fundamental challenges when handling tasks requiring complex reasoning or computations beyond their inherent architectural constraints. Building on these insights, we now examine the impact of finite precision arithmetic on the computational power of SSMs. In practical implementations, SSMs operate with finite precision due to hardware constraints, typically using fixed-point or floating-point representations with a limited number of bits. This finite precision restricts the range and granularity of values that the model's parameters and hidden states can represent. Let us denote by $\mathcal{S}$ the vectors that the hidden states can take. 
Consequently, it is reasonable to assume that an SSM does not embed or store all the information provided during training. Instead, it must extract and retain only a subset of relevant information within a memory of much smaller capacity than the input.

Since SSMs have a fixed hidden dimension \(d\) and operate with finite precision, the set $\mathcal{S}$ is finite. This finiteness imposes significant restrictions on the types of languages that SSMs can compute or recognize. To formalize this limitation, we introduce the following definition:

\begin{definition} Given a finite-precision SSM, we say the SSM accepts an input sequence $\xx_1, \dots, \xx_n$ if the hidden state $\boldsymbol{h}_{n+1} \not = \boldsymbol{0}$. We call the set of accepted input sequences the language of the SSM.
\end{definition}

The following theorem relates the computational power of a finite precision SSM to a Finite-State Machine (FSM).

\begin{theorem}\label{thm:ssm_regular_language}
The language of a finite-precision SSM is within the class of regular languages.
\end{theorem}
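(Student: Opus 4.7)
The plan is to exhibit the finite-precision SSM directly as a deterministic finite automaton (DFA) and then invoke the classical Myhill--Nerode / Kleene equivalence between DFAs and regular languages. Two observations drive the reduction. First, because the embedding dimension $d$ is fixed and each coordinate of a hidden state is stored with finite precision, the set $\mathcal{S}$ of realizable hidden states is finite. Second, the input tokens are drawn from a finite alphabet $\Sigma$: either they come from a fixed vocabulary, or they themselves are finite-precision vectors. This turns what is a priori a continuous dynamical system into a map on a finite set.

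Concretely, I would construct the DFA $M = (Q, \Sigma, \delta, q_0, F)$ as follows. Take $Q = \mathcal{S}$ and $\Sigma$ as above. Let $q_0 = \boldsymbol{h}_0 = \boldsymbol{0}$ be the initial hidden state and set $F = \mathcal{S} \setminus \{\boldsymbol{0}\}$, matching the SSM acceptance condition. The transition function is read straight off the SSM recurrence: for $\boldsymbol{h} \in \mathcal{S}$ and $\boldsymbol{x} \in \Sigma$, define
\[
\delta(\boldsymbol{h}, \boldsymbol{x}) \;=\; \mathrm{fl}\bigl(\boldsymbol{A}(\boldsymbol{x})\,\boldsymbol{h} + \boldsymbol{B}(\boldsymbol{x})\,\boldsymbol{x}\bigr),
\]
where $\mathrm{fl}(\cdot)$ is the finite-precision quantization onto $\mathcal{S}$. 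Because the update depends only on the current hidden state and the current input token, and because the quantized arithmetic is deterministic, $\delta$ is a well-defined function $\mathcal{S} \times \Sigma \to \mathcal{S}$.

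A straightforward induction on the input length then shows that after reading $\boldsymbol{x}_1, \dots, \boldsymbol{x}_n$, the automaton $M$ is in exactly the state $\boldsymbol{h}_{n+1}$ computed by the SSM. Hence $M$ accepts $\boldsymbol{x}_1 \cdots \boldsymbol{x}_n$ iff $\boldsymbol{h}_{n+1} \neq \boldsymbol{0}$ iff the SSM accepts, so the language of the SSM equals $L(M)$, which is regular.

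The only real subtlety, and the place where I would spend care in writing the proof, is pinning down the finiteness of the alphabet and of the state grid: one has to fix a quantization convention for both inputs and internal computations and verify that the rounded recurrence keeps the state inside $\mathcal{S}$. Once this bookkeeping is in place, the reduction to a DFA is purely syntactic and there is no genuine mathematical obstacle. Worth noting is that the construction uses $|\mathcal{S}|$ states, which is exponential in $dp$; the theorem is thus a qualitative ceiling (regularity) rather than a quantitative statement, and any attempt to strengthen it would require analyzing the structure of the transition map rather than merely counting states.
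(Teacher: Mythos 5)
Your proposal is correct and follows essentially the same route as the paper's own proof: both construct a finite-state machine whose states are the finitely many realizable hidden states, whose transition function is the SSM update rule, and whose accepting states are all nonzero states. If anything, your version is slightly more careful than the paper's in making the quantization map $\mathrm{fl}(\cdot)$ and the finiteness of the input alphabet explicit, which is exactly the bookkeeping the paper leaves implicit.
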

\begin{proof}
We construct the Finite-State Machine \(M = (Q, \Sigma, \delta, q_0, F)\), where:
\begin{enumerate}
    \item the set of states \(Q\) is the set of all possible vectors $\mathcal{s}$,
    \item the finite input alphabet \(\Sigma\) is as well $\mathcal{S}$,
    \item the transition function \(\delta: Q \times \Sigma^2 \rightarrow Q\) is defined by the SSM's state update equations, i.e., $\delta(\boldsymbol{h},\boldsymbol{x}) = \boldsymbol{A}(\boldsymbol{x})\boldsymbol{h} + \boldsymbol{B}(\boldsymbol{x})\boldsymbol{x} $
    \item \(q_0\) is the initial state corresponding to the initial hidden state \(\boldsymbol{h}_0\),
    \item \( F  = Q \setminus \setof{\boldsymbol{0}} \) is the set of accepting states.
\end{enumerate}
Since the SSM has finite precision, the set $\mathcal{V}$ is finite, and this defines a valid FSM. It is immediate that the FSM models the transitions of the SSM and accepts the same language.
\end{proof}

This result relates the computational power of SSMs to FSMs. Consequently, under finite precision constraints, SSMs cannot recognize languages beyond regular languages. Regarding computational limitations, tasks requiring computational models with greater expressive power, such as context-free grammars or context-sensitive grammars, cannot be \textbf{efficiently solved} by SSMs with finite precision. Examples of such tasks include recognizing balanced parentheses, detecting palindromic sequences, and performing more complex logical inference that necessitates memory beyond finite states. 

These limitations are significant because they highlight the boundaries of what SSMs can achieve in practical settings. Regarding practical considerations, since real-world implementations of SSMs operate on hardware with finite memory and finite precision arithmetic, these theoretical limitations directly apply to SSMs used in actual applications. Therefore, when designing systems for tasks that require processing beyond regular languages, it becomes clear that SSMs with finite precision do not suffice, and alternative architectures or computational mechanisms need to be considered to overcome these inherent constraints.

\section{Experiments}
\label{sec:experiments}
Our theoretical results suggest that SSMs inherently struggle with function composition and multi-step reasoning tasks due to their architectural limitations. To validate these findings, we empirically assess SSMs' performance on practical tasks requiring these capabilities.

We evaluate the inability of various sequence models to address function composition tasks by examining three axes of composition: spatial, temporal, and relational (Appendix~\ref{sec:composition_compositional}). This evaluation uses four datasets designed to test function composition. Subsequently, we proceed to compositional tasks involving multi-digit multiplication, dynamic programming, and Einstein's puzzle. We investigate the effects of Chain-of-Thought (CoT) prompting (Appendix~\ref{subsec:cot_experiments}) and conduct a thorough error analysis to understand the failure points and underlying reasons for the erroneous behavior (Appendix~\ref{subsec:error_analysis}).

We conducted GPT experiments using the ChatGPT API \citep{openai2023gpt4} and performed all experiments with the GPT-4 model as of June 2024, while other models were evaluated on machines equipped with 2x NVIDIA A100 80 GB GPUs. We used Jamba version 1. Unless otherwise specified, each task is evaluated three times with 500 test samples per evaluation to ensure consistency and minimize variance. All other experimental details, including prompts and additional results, are provided in the Appendix.

\section{Related Work}

\paragraph{Limitations in Function Composition and Reasoning}

Recent studies have underscored the limitations of deep learning models, particularly Transformers, in handling tasks requiring deep compositionality and multi-step reasoning \citep{peng2024limitations, dziri2023faith}. These tasks are crucial in applications like mathematical problem-solving \citep{li2023camel}, algorithm learning \citep{thomm2024limits, velickovic2021neural}, logical reasoning \citep{Liu2023EvaluatingTL}, and dynamic programming \citep{dziri2023faith}. Despite their capabilities, Transformers have been shown to struggle with function composition, which is essential for understanding relational information in data \citep{guan2024mitigating}.

Research has highlighted architectural and training limitations that prevent these models from maintaining accuracy over multiple reasoning steps, leading to issues like hallucinations and reasoning errors \citep{merrill-sabharwal-2023-parallelism, zhao2023brain}. Studies by \citet{merrill2024illusion} and \citet{peng2024limitations} have identified that both Transformers and SSMs belong to weak complexity classes, such as logspace-uniform $\mathbf{TC}^0$, which limits their computational abilities.
However, prior work primarily focused on Transformers, with SSMs not thoroughly investigated theoretically and empirically concerning their ability to perform function composition and compositional tasks. Our contribution fills this gap by providing a comprehensive theoretical framework and empirical analysis specific to SSMs.

\paragraph{Chain-of-Thought Prompting}
The Chain-of-Thought (CoT) prompting method has been proposed to improve reasoning capabilities in large language models by breaking down complex tasks into smaller, intermediate steps \citep{wei_nips2022}. CoT prompting aims to mitigate issues like hallucinations and enhance multi-step reasoning by encouraging models to generate intermediate reasoning steps.
While CoT has shown promise in specific contexts, recent research indicates that even with CoT prompting, current models remain inadequate for solving deeply compositional tasks \citep{merrill-sabharwal-2023-parallelism, liu2023transformers}. Our work supports these findings, demonstrating that CoT prompting does not overcome the fundamental computational limitations of SSMs and Transformers in tasks requiring complex reasoning.

While advanced methods like tree search algorithms \citep{AlphaGeometryTrinh2024, Polu2020GenerativeLM, Lample2022HyperTreePS} and self-correction techniques \citep{wang2024a, Kumar2024TrainingLM} have been proposed to improve reasoning by integrating external mechanisms, our work focuses on the inherent computational limitations of SSMs and Transformers when used without such augmentations. These external engines can mitigate some limitations by leveraging additional resources, but they do not address the core architectural constraints we have identified.

\paragraph{Expressive Power and Complexity of Neural Networks}
A growing body of work is exploring the expressive power of neural network architectures and their limitations from a computational complexity perspective. \citet{weiss2018practical} and \citet{siegelmann1992computational} examined the capabilities of recurrent neural networks in relation to Turing machines. \citet{perez2019turing} investigated the Turing completeness of Transformers under certain conditions.

More recently, \citet{merrill2020parameter} analyzed the relationship between network depth, parameter size, and computational expressivity. \citet{bhattamishra2020computational} explored the computational limitations of Transformers concerning formal languages. Our work contributes to this line of research by analyzing SSMs within the framework of computational complexity, specifically their placement within the class $\mathbf{L}$ and implications for their reasoning capabilities.

\paragraph{Alternative Approaches to Complex Reasoning}
Given the limitations of current architectures, researchers have explored alternative approaches to enhance models' reasoning abilities. Methods include integrating external memory modules \citep{graves2016hybrid}, incorporating symbolic reasoning components \citep{gaunt2017differentiable}, and developing neuro-symbolic models \citep{dai2019transformer}. These approaches aim to combine the strengths of neural networks with symbolic computation to overcome the shortcomings in tasks requiring complex, multi-step reasoning.
Our findings underscore the necessity for such innovative solutions, suggesting that overcoming the fundamental limitations identified requires moving beyond traditional deep learning paradigms.

\section{Conclusion}
In this work, we have demonstrated both theoretically and empirically that Structured State Space Models (SSMs) and Transformers face fundamental limitations in performing function composition and complex reasoning tasks. Our theoretical analysis shows that overcoming these limitations would require architectures beyond finite-state machines. SSMs with fixed hidden dimensions and layers are equivalent to finite-state machines and thus limited to regular languages (Theorem~\ref{thm:ssm_regular_language}). This limitation explains their inability to handle tasks that require computational power beyond regular languages, such as context-free languages or problems that are $\mathbf{NL}$-complete.

Our empirical evaluations confirm these findings, revealing significant performance degradation as task complexity increases, even when employing advanced prompting techniques. Models often resort to shortcuts, leading to errors in multi-step reasoning processes. These results highlight that current deep learning architectures are fundamentally limited in their ability to perform reliable multi-step reasoning and compositional task-solving due to their architectural constraints.
This underscores the necessity for innovative architectural solutions or computational frameworks that can handle such tasks more efficiently. Future research should explore new directions, such as integrating symbolic reasoning components, improving memory and state-tracking capabilities, or developing hybrid models that transcend the limitations of existing architectures. Addressing these challenges is crucial for advancing toward general artificial intelligence capable of sophisticated reasoning and problem-solving across diverse domains. 

\section{Acknowledgment}
This work was supported by the European Research Council (ERC) under grant agreement No.
864042 (AGILEFLIGHT).

\clearpage

\bibliography{iclr2025_conference}
\bibliographystyle{iclr2025_conference}

\newpage
\clearpage

\begin{appendices}

\startcontents[sections]
\printcontents[sections]{l}{1}{\setcounter{tocdepth}{2}}
\newpage

\tcbset{
    mybox/.style={
        colframe=blue!70,
        colback=blue!10!white,
        coltitle=black,
        fonttitle=\bfseries,
        width=1.5\textwidth,
        sharp corners,
    }
}

\section{Background on Communication Complexity and Computational Classes}
\label{sec:cc:background}
To provide a solid foundation for our theoretical results, we offer an overview of key concepts in communication complexity and computational complexity theory. This background is essential for understanding the limitations of SSMs and other deep learning architectures in sequence modeling tasks that require complex reasoning.

\subsection{Communication Complexity}
\textbf{Communication complexity} studies the amount of communication required between two or more parties to compute a function whose input is distributed among them. It provides lower bounds on the communication needed for distributed computation.

\textbf{Communication Protocols:} A communication protocol specifies the rules by which parties exchange messages to compute a function collaboratively. The primary goal is to minimize the total number of bits exchanged.

\textbf{Key Problems in Communication Complexity:}
\begin{itemize}
    \item \textit{Function Composition Problem:} Two parties, \textbf{Faye} and \textbf{Grace}, hold functions $f: B \rightarrow C$ and $g: A \rightarrow B$, respectively, along with a common input $x \in A$. Their goal is to compute $f(g(x))$ with minimal communication to a third party, \textbf{Xavier}. This problem models scenarios where composing functions over large domains requires significant communication, highlighting the challenges in function composition tasks for sequence models.
    
    \item \textit{Pointer Chasing Problem:} This involves two parties who alternately apply functions to an initial input over several rounds. It is a fundamental problem used to establish lower bounds in communication complexity. It demonstrates that certain computations inherently require a substantial amount of communication, regardless of the protocol used.
    
    \item \textit{Set Disjointness Problem:} Two parties each hold a subset of a universal set and wish to determine if their subsets intersect without revealing additional information. This problem is notable for having high communication complexity, serving as a basis for proving lower bounds in various computational models.
\end{itemize}

\textbf{Relevance to Sequence Modeling:} Communication complexity provides tools to prove theoretical limits on the capabilities of computational models, including neural networks. By reducing problems in communication complexity to tasks performed by sequence models, we can establish lower bounds on the resources required (e.g., hidden state size, number of layers) for these models to perform certain computations. This approach helps in understanding why models like SSMs struggle with tasks requiring complex reasoning or function composition.

\subsection{Computational Complexity Classes}
Computational complexity theory classifies problems based on the resources required to solve them, such as time or memory space. Understanding these classes is crucial for characterizing the limitations of computational models.

\textbf{Key Complexity Classes:}
\begin{itemize}
    \item \textbf{L (Logarithmic Space):} The class of decision problems solvable by a deterministic Turing machine using logarithmic amount of memory space with respect to the input size. Problems in L are considered efficiently solvable with very limited memory.
    
    \item \textbf{NL (Nondeterministic Logarithmic Space):} Consists of decision problems solvable by a nondeterministic Turing machine using logarithmic space. NL is a broader class than L, as nondeterminism allows guessing and verifying solutions using limited memory.
    
    \item \textbf{P (Polynomial Time):} Contains decision problems solvable by a deterministic Turing machine in polynomial time. It represents problems that are efficiently solvable in terms of time, without specific memory constraints.
    
    \item \textbf{Regular Languages:} The class of languages recognizable by finite automata or equivalently, by regular expressions. They are the simplest class in the Chomsky hierarchy and can be recognized using constant memory.
    
    \item \textbf{Context-Free Languages:} Recognizable by pushdown automata, these languages can handle nested structures and require memory that grows with the input size.
    
    \item \textbf{TC$^0$ (Constant Depth Threshold Circuits):} A class of problems solvable by constant-depth, polynomial-size circuits with threshold gates. These circuits can compute certain functions very efficiently in parallel.
\end{itemize}

\textbf{Relationships Between Classes:}

\begin{equation}
\text{Regular Languages} \subseteq \mathbf{L} \subseteq \mathbf{NL} \subseteq \mathbf{P}
\end{equation}

It's widely believed that these inclusions are strict (e.g., $\mathbf{L} \neq \mathbf{NL}$), meaning each class strictly contains the previous one.

\textbf{Relevance to Sequence Modeling:} By placing computational models within these complexity classes, we can formalize their computational power and limitations. For instance:
\begin{itemize}
    \item \textbf{Finite-State Machines (FSMs):} Equivalent to models that recognize regular languages. They have a finite number of states and cannot handle tasks requiring memory that scales with input size.
    
    \item \textbf{Pushdown Automata:} Recognize context-free languages and can handle nested or recursive structures due to their use of a stack.
    
    \item \textbf{SSMs and Transformers:} Our analysis shows that SSMs with fixed hidden dimensions and layers are equivalent to FSMs, limiting them to regular languages. Similarly, Transformers have been shown to have limitations corresponding to the class TC$^0$ or L under certain conditions.
\end{itemize}

\textbf{Implications for SSMs:} Understanding that SSMs are limited to regular languages explains why they struggle with tasks requiring more computational power, such as:
\begin{itemize}
    \item \textit{Function Composition:} Requires the ability to maintain and manipulate information over long sequences, which exceeds the capabilities of finite-state models.
    
    \item \textit{Complex Reasoning Tasks:} Problems like multi-digit multiplication, logical puzzles, and dynamic programming necessitate memory and computational resources beyond what is available in models limited to regular languages.
\end{itemize}

By grounding our analysis in communication complexity and computational complexity theory, we establish a theoretical foundation for the limitations of SSMs. This background enables us to formalize the challenges faced by sequence models in handling tasks that require computational resources beyond regular languages and logarithmic space.

\subsection{Key Problems and Their Complexity}
To further contextualize the limitations of SSMs, we briefly describe some computational problems and their associated complexity classes:

\begin{itemize}
    \item \textbf{Derivability (NL-Complete):} Given a finite set and a relation, determine if there is a sequence of elements satisfying certain conditions. This problem requires nondeterministic logarithmic space and cannot be solved by models limited to L unless $\mathbf{L} = \mathbf{NL}$.
    
    \item \textbf{2-SAT (NL-Complete):} A satisfiability problem where each clause has at most two literals. It is solvable in polynomial time but is NL-complete, indicating it requires more than deterministic logarithmic space.
    
    \item \textbf{Horn SAT and Circuit Evaluation (P-Complete):} Problems that are as hard as any problem in P. Solving these efficiently would require polynomial time computation, beyond the capabilities of FSMs.
    
    \item \textbf{Mod 2 SAT (Beyond L):} Involves solving satisfiability problems modulo 2. Requires computational resources beyond deterministic logarithmic space.
\end{itemize}

\textbf{Relevance to Our Work:} The inability of SSMs to solve these problems stems from their equivalence to finite-state machines. Since FSMs cannot utilize memory that grows with input size, they are inherently incapable of solving problems that require maintaining and processing an unbounded amount of information.
The limitations highlighted by these complexity classes and problems suggest that to handle complex reasoning tasks effectively, sequence models need architectures that go beyond finite-state computations. This could involve models that can simulate pushdown automata or Turing machines, allowing them to recognize context-free languages or perform computations requiring more substantial memory resources.

\section{Main Experiments}
\subsection{Function composition and compositional tasks}
\label{sec:composition_compositional}
In the context of Large Language Models (LLMs), compositional tasks differ from function composition. Function composition \( f_{K}(f_{K-1}(\ldots (f_1(x)))) \) is a mathematical process where the output of one function serves as the input for another across multiple functions \( f_1, f_2, \ldots, f_K \). Conversely, LLM compositional tasks involve breaking down complex inputs into simpler parts and integrating the results to generate an overall output. Examples include (i) combining linguistic elements to generate coherent text, (ii) solving multi-step reasoning problems, and (iii) decomposing complex tasks (e.g., multi-turn conversations, summarization) into manageable sub-tasks.

Solving compositional tasks necessitates the capability to perform function composition \citep{peng2024limitations, dziri2023faith} and demands additional competencies such as contextual understanding, multi-step reasoning, and the integration of diverse information types. A model's proficiency in function composition is a critical prerequisite for tackling complex compositional tasks \citep{lu_nips2023}. For instance, if an SSM-powered LLM cannot evaluate \( f(g(x)) \), it will be inadequate for tasks involving multi-step arithmetic or logical operations that depend on nested functions.

\paragraph{Composition tasks}
\label{par:experiments:composition}
We begin with three fundamental composition tasks: spatial, temporal, and relationship compositions. These axes are crucial as they encapsulate core aspects of comprehending and interacting with the world. Spatial composition entails integrating information about the positions and orientations of objects. Temporal composition involves reasoning over sequences and durations of events. Relationship composition focuses on understanding the connections between entities, such as those in a genealogy tree.

\paragraph{Number of Parameters}
\label{par:number_of_parameters}
We conducted experiments using Jamba \citep{lieber_arxiv_2024} (joint Mamba and Attention) with 7B parameters, Mamba \citep{gu2023mamba} with 2.8B parameters, S4-H3 \citep{gu2022efficiently, fu2023hungry} with 2.7B parameters, GPT-4 \citep{openai2023gpt4}, and GPT-4o models. Qualitative results are presented in Fig.~\ref{fig:qualitative_composition_zero_shot}. As illustrated in Fig.~\ref{fig:qualitative_composition_zero_shot}, all models failed to answer questions across the three composition axes correctly.

\vspace{-3mm}
\begin{figure}[ht]
    \centering
    \scalebox{0.6}{ 
    \begin{minipage}[t]{0.55\textwidth}
        \begin{tcolorbox}[colback=blue!5!white,colframe=blue!75!black,title=Problem 1 - Spatial axis]
        \textbf{Question:} Rectangle is to the west of the pentagon. The triangle is to the north of the square. The rectangle is to the south of the square. The triangle is to the west of the circle. Where is the square located in relation to the pentagon? \\
        
        \textbf{Jamba:} East \crossmark \\
        \textbf{Mamba:} Northeast \crossmark \\
        \textbf{GPT-4:} Northeast \crossmark \\
        \textbf{GPT-4o:} North \crossmark \\
        \textbf{Correct:} Northwest. \checkm
        \end{tcolorbox}
    \end{minipage}
    \hfill
    \begin{minipage}[t]{0.55\textwidth}
        \begin{tcolorbox}[colback=blue!5!white,colframe=blue!75!black,title=Problem 2 - Temporal axis]
        \textbf{Question:} Anne is the younger sister of Erwin, Erwin is the elder brother of Daniel. Is Anne younger than Daniel?  \\
        
        \textbf{Jamba:} Yes \crossmark \\
        \textbf{Mamba:} Yes \crossmark \\
        \textbf{GPT-4:} Yes \crossmark \\
        \textbf{GPT-4o:} Yes \crossmark \\
        \textbf{Correct:} Not enough information. \checkm
        \end{tcolorbox}
    \end{minipage}
    \hfill
    \begin{minipage}[t]{0.55\textwidth}
        \begin{tcolorbox}[colback=blue!5!white,colframe=blue!75!black,title=Problem 3 - Relationship axis]
        \textbf{Question:} Alan is the son of Marco, Joe is the son of Alan. Does Alan have any grandchildren? \\

        \textbf{Jamba:} Yes \crossmark \\
        \textbf{Mamba:} Yes \crossmark \\
        \textbf{GPT-4:} No \crossmark \\
        \textbf{GPT-4o:} No \crossmark \\
        \textbf{Correct:} Not enough information. \checkm
        \end{tcolorbox}
    \end{minipage}
    \hfill
    }
\caption{Qualitative example of zero-shot inference on prominent SSM and Attention-based models. None of the models successfully resolved the problems across any of the composition axes.}
\label{fig:qualitative_composition_zero_shot}
\end{figure}

To quantitatively assess the limitations of models, including the latest GPT-4o \citep{openai2023gpt4}, in solving function composition tasks, we evaluate their performance on four datasets specifically designed to test these capabilities. Unless otherwise specified, each model is tested on 500 samples.

\paragraph{Composition datasets}
\label{par:composition_datasets}
\textbf{\textit{Math-QA}} dataset, derived from \citep{li2023camel}, includes 25 math topics. We focus on the first 100 samples from Algebra, Calculus, Combinatorics, Game Theory, and Trigonometry. Problems involve solving function compositions and temporal reasoning. \textbf{\textit{BIG-Bench Hard}} \citep{suzgun2022challenging} dataset features 250 Boolean expressions that the model must evaluate. In \textbf{\textit{Temporal-NLI}} \citep{Thukral2021ProbingLM} dataset, each sample consists of a premise (e.g., "They got married on Saturday") and a hypothesis (e.g., "They got married before Friday"), requiring the model to determine if the relationship is entailment, contradiction, or neutral. \textbf{\textit{SpaRTUN}} \citep{mirzaee-kordjamshidi-2022-transfer} dataset is designed for spatial reasoning, and it includes stories describing the spatial positions of objects, followed by questions about the orientation of one object relative to another (e.g., left, right, inside, above).

\begin{table}[ht]
\captionsetup{font=scriptsize}
    \centering
    \begin{minipage}{0.48\textwidth}
        \centering
        \scalebox{0.65}{
        \begin{tabular}{c c c c c c}
            \toprule
            & \textbf{GPT-4o} & \textbf{GPT-4} & \textbf{Jamba} & \textbf{Mamba} & \textbf{S4-H3} \\
            \midrule
            Math-QA & $51.8\%$ & $51.0\%$ & $42.2\%$ & $35.0\%$ & $28.6\%$ \\
            BIG-Bench Hard & $56.8\%$ & $58.4\%$ & $78.2\%$ & $67.0\%$ & $60.6\%$ \\
            Temporal-NLI & $79.4\%$ & $77.2\%$ & $69.8\%$ & $59.2\%$ & $54.6\%$\\
            SpaRTUN & $80.8\%$ & $61.4\%$ & $50.8\%$ & $42.2\%$ & $35.2\%$ \\
            \bottomrule
        \end{tabular}}
        \vspace{3mm} 
        \caption{Performance of Attention, SSM and Attention-SSM based models on various function composition tasks involving logical expressions, temporal reasoning, spatial reasoning, and math tasks.}
        \label{tab:function_composition_tasks}
    \end{minipage}\hfill
    \begin{minipage}{0.48\textwidth}
        \centering
        \scalebox{0.65}{
        \begin{tabular}{c c c c c c}
            \toprule
            & \textbf{GPT-4o} & \textbf{GPT-4} & \textbf{Jamba} & \textbf{Mamba} & \textbf{S4-H3} \\
            \midrule
            Algebra & $51\%$ & $47\%$ & $42\%$ & $36\%$ & $29\%$ \\
            Calculus & $50\%$ & $48\%$ & $41\%$ & $34\%$ & $28\%$ \\
            Combinatorics & $88\%$ & $70\%$ & $48\%$ & $38\%$ & $33\%$\\
            Game theory & $30\%$ & $40\%$ & $50\%$ & $41\%$ & $32\%$ \\
            Trigonometry & $40\%$ & $50\%$ & $30\%$ & $26\%$ & $21\%$ \\
            \bottomrule
        \end{tabular}}
        \vspace{1mm} 
        \caption{Performance of models on various topics within the Math-QA dataset. Input dependency consistently improves performance, with Mamba consistently outperforming S4-H3.}
        \label{tab:math_qa_tasks}
    \end{minipage}
\end{table}

The results presented in Tables \ref{tab:function_composition_tasks} and \ref{tab:math_qa_tasks} highlight several critical observations regarding the performance of various models across different composition tasks. Notably, Mamba \citep{gu2023mamba} consistently outperforms the S4-H3 \citep{gu2022efficiently, fu2023hungry} model, despite both having almost the same number of parameters. This performance gap underscores the importance of input-dependence in model design, as Mamba's architecture better leverages input information to achieve superior results. Additionally, while GPT-4o is the most performant overall, it struggles with many tasks, including those that seem simple to humans, such as logical expression chaining, as evidenced by its performance on the BIG-Bench Hard \citep{suzgun2022challenging} benchmark. This indicates that even state-of-the-art models like GPT-4o have limitations in solving complex composition tasks, which numerically justifies our theoretical findings. Accuracy for all models is calculated as the number of correct answers divided by the total number of samples.

\paragraph{Compositional tasks}
\label{par:experiments:compositional}
Having demonstrated that models encounter difficulties even with more straightforward composition tasks, we now examine their performance on more complex compositional tasks. Given their proven inability to perform function composition, as established in Theorem~\ref{theorem:1}, it is entirely anticipated that their performance on these tasks will be suboptimal. We explore three compositional tasks: (i) multi-digit multiplication, (ii) dynamic programming, and (iii) Einstein's puzzle.

For the \textit{multi-digit multiplication} task, we generate question-answer pairs such as "What is 5 times 90?" with the answer being "450". This task involves multiplying two numbers, \( x = (x_1, x_2, \dots, x_k) \) and \( y = (y_1, y_2, \dots, y_k) \), where each number can have up to \( k \) digits. Consequently, there are \( 9 \times 10^{(k-1)} \) possible combinations for each number. In our experiments, we set \( k \) to 5 and found that both Attention and SSM-based models are unable to solve the 5-by-5 digit multiplication task, even in the case of GPT-4o with CoT prompting (A-~\ref{fig:multiplication_cot_models}).

\begin{figure}[ht!]
    \centering
    \begin{subfigure}[t]{0.35\textwidth}
        \centering
        \includegraphics[width=\textwidth]{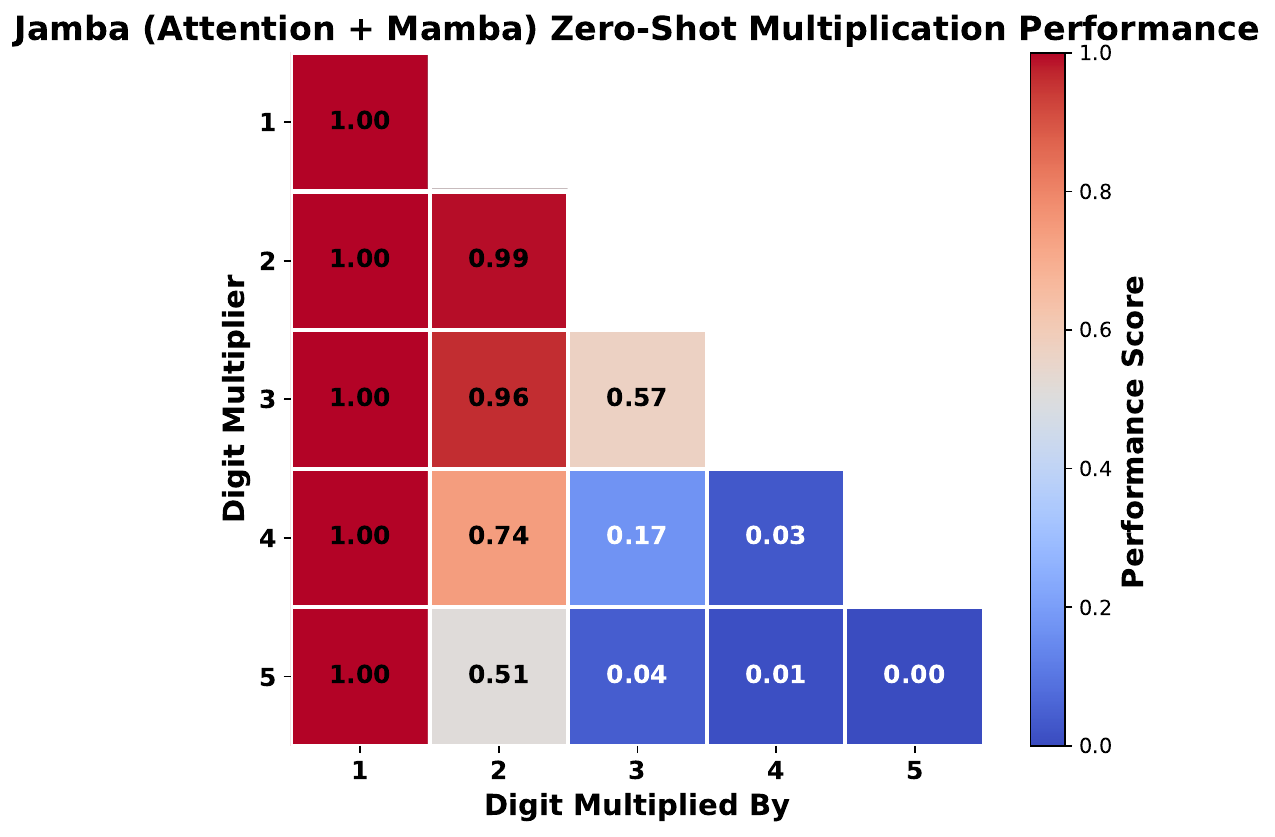}
        \label{fig:plot_multiplication}
    \end{subfigure}%
    \hfill
    \begin{subfigure}[t]{0.325\textwidth}
        \centering
        \includegraphics[width=\textwidth]{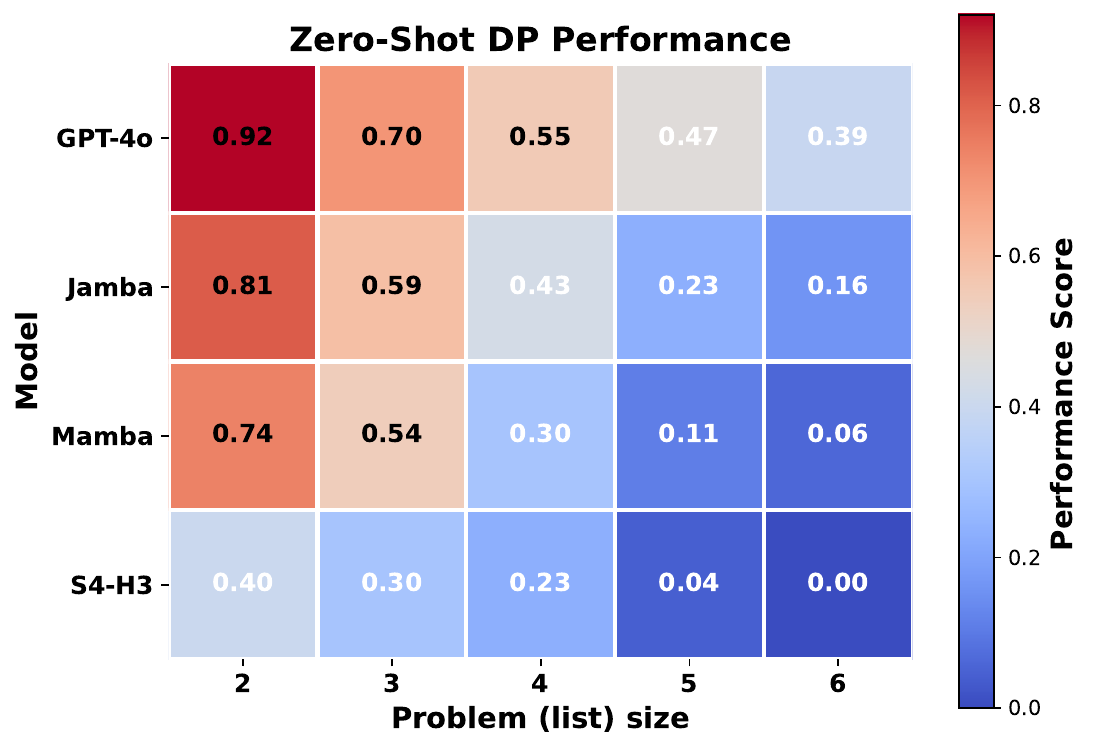}
        \label{fig:plot_dp}
    \end{subfigure}%
    \hfill
    \begin{subfigure}[t]{0.32\textwidth}
        \centering
        \includegraphics[width=\textwidth]{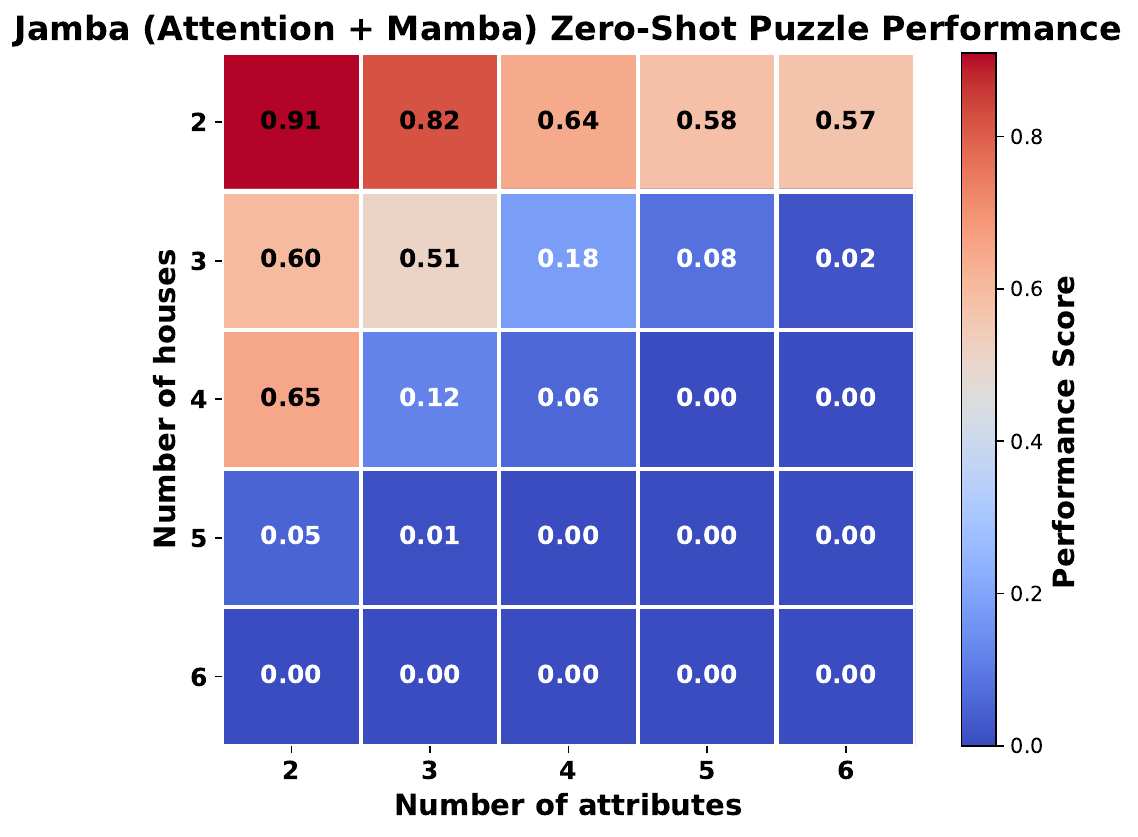}
        \label{fig:plot_ein}
    \end{subfigure}
    \vspace{-5mm}
    \caption{Jamba \cite{lieber_arxiv_2024} performance on multiplication, DP and puzzle tasks. For DP various models are shown. All struggle with compositional tasks, especially for larger input size.}
    \label{fig:compositional_plots}
\end{figure}

\textit{Dynamic programming (DP)} recursively decomposes complex problems into simpler sub-problems, making solutions compositional by nature. We consider a classic relaxation of the NP-complete Maximum Weighted Independent Set problem \citep{kleinberg_tardos_book}: \textit{Given a sequence of integers, find a subsequence with the highest sum such that no two numbers in the subsequence are adjacent in the original sequence}. DP can solve This relaxation in \( O(n) \) time. For our experiments, we restrict each integer to the range \([-5, 5]\) and follow the generation steps as in \citep{dziri2023faith}, with an input list containing from 2 to 6 elements. Prompting details are shown in the A-\ref{app:subsec:dynamic_prog}.

\textit{Einstein's puzzle} is a well-known logic puzzle commonly used as a benchmark for solving constraint satisfaction problems \citep{Prosser1993HYBRIDAF}. It involves a series of houses with various attributes, and the objective is to determine which attributes correspond to each home by interpreting a set of predefined natural language clues or constraints. The solution to the puzzle is represented as a matrix of size \( H \times A \), where \( H \) denotes the number of houses and \( A \) represents the number of attributes. As \( H \) and \( A \) increase, synthesizing partial solutions that satisfy individual constraints becomes increasingly compositionally complex. Qualitative examples and details about data generation for this task are provided in the A-\ref{app:subsec:einsteins_puzzle}.

\subsection{CoT experiments}
\label{subsec:cot_experiments}
Next, we evaluate how the popular chain-of-thought (CoT) prompting method \citep{wei_nips2022} affects the performance of GPT-4o \citep{openai2023gpt4}, Jamba \citep{lieber_arxiv_2024}, Mamba \citep{gu2023mamba} and S4-H3 \citep{gu2022efficiently} models on compositional tasks from Sec. \ref{sec:composition_compositional}. CoT improves performance but does not solve the problem. Details of the experiments and examples of full prompts can be found in the A-\ref{app:sec:details_cot_experiments}. 

\subsection{Error analysis}
\label{subsec:error_analysis}
We focus on graph analysis of errors, emphasizing multi-digit multiplication because this problem is easier to interpret and understand. From this analysis, we obtain a few interesting conclusions about how errors happen and then propagate inside SSM-based LLMs \citep{fu2023hungry, gu2023mamba, mirzaee-kordjamshidi-2022-transfer}.

\paragraph{Computation Graph}
\label{def_computation_graph}
To study the propagation of errors and its dependency on input size, we define $A$ as a deterministic algorithm (function) and $\mathcal{F}_A$ as the set of primitives (functions) the algorithm employs during execution. Given inputs $\mathbf{x}$ to the algorithm $A$, we define the static computation graph of $A(\mathbf{x})$, denoted as $G_{A(\mathbf{x})}$, as $G_{A(\mathbf{x})} = (V, E, s, op)$, a directed acyclic graph.

Nodes $V$ represent all variable values during $A$'s execution, where each node $v \in V$ has an associated value $s(v) \in \mathbb{R}$. Edges $E$ represent function arguments involved in computations: for each non-source node $v \in V$, let $U = \{u_1, \ldots, u_j\} \subset V$ be its parent nodes. Then, $s(v) = f(u_1, \ldots, u_j)$ for some $f \in \mathcal{F}_A$. Since each node $v$ is uniquely determined by the computation of a single primitive $f$, we define $op: V \rightarrow \mathcal{F}_A, op(v) = f$ as the operator function that yields $s(v)$.
Let $S \subset V$ be the source nodes of $G_{A(\mathbf{x})}$, and without loss of generality, let $o \in V$ be its sole leaf node. By definition, $S \equiv \mathbf{x}$ and $A(\mathbf{x}) = s(o)$, representing the input and output of $A$, respectively.
To evaluate a language model's ability to follow algorithm $A$, we must linearize $G_{A(\mathbf{x})}$ (arrange the nodes in a linear sequence that respects the dependencies). This means if a node $u$ is a parent of node $v$, the $u$ should appear before $v$ in the sequence. Since we only consider autoregressive models, this linearization must also be a topological ordering. A topological order ensures that every node appears after its parent nodes, maintaining the correct order of computations. This is crucial for correctly following the sequence of operations defined by the algorithm $A$.
\begin{figure}[ht]
    \centering
    \begin{minipage}{0.49\textwidth}
        \scalebox{0.80}{
        \begin{minipage}{\textwidth}
        \begin{algorithm}[H]
        \caption{Multiply two numbers}
        \label{alg:multiplication}
        \begin{algorithmic}[1]
        \Function{multiply}{$x[1..p]$, $y[1..q]$}
            \Comment{multiply $x$ for each $y[i]$}
            \For{$i = q$ \textbf{to} $1$}
                \State carry = $0$
                \For{$j = p$ \textbf{to} $1$}
                    \State \textcolor{brown}{$t = x[j] \times y[i]$}
                    \State \textcolor{orange}{$t \mathrel{+}= \text{carry}$} \Comment{add carry}
                    \State \textcolor{green}{carry $= t \div 10$}
                    \State \textcolor{blue}{digits $[j] = t \mod 10$}
                \EndFor
                \State summands[$i$] = digits
            \EndFor
            \State product $= \sum_{i=1}^{q} \text{summands}[q+1-i] \cdot 10^{i-1}$
            \State \Return product
        \EndFunction
        \end{algorithmic}
        \end{algorithm}
        \end{minipage}}
    \end{minipage}
    \hfill
    \begin{minipage}{0.50\textwidth}
        \centering
        \scalebox{1}{
        \includegraphics[width=\textwidth]{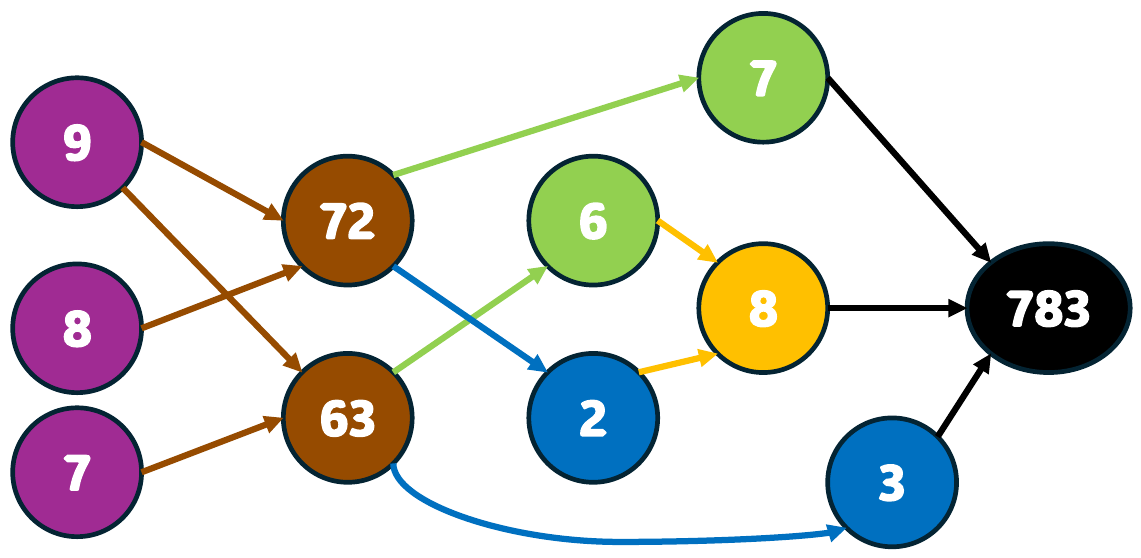}}
        \caption{Example of 2-by-1 digit multiplication (87 $\times$ 9). Operations on graph include: \textcolor{purple}{inputs}, \textcolor{brown}{multiply 1-digit}, \textcolor{green}{carry}, \textcolor{orange}{sum}, \textcolor{blue}{mod 10} and \textcolor{black}{output}.}
    \label{fig:2_by_1_digit_multiplication}
    \end{minipage}
\end{figure}

To instantiate $G_{A(\mathbf{x})}$, let $\mathcal{F}_A=\{\text{one-digit multiplication, sum, mod 10, carry over, concatenation}\}$. Source nodes $S$ are digits of input numbers, leaf node $o$ is the final output, and intermediate nodes $v$ are partial results generated during the execution of the long-form multiplication algorithm (see Fig.~\ref{fig:2_by_1_digit_multiplication}). The corresponding algorithm is on the left of the Fig.~\ref{fig:2_by_1_digit_multiplication} - Alg.~\ref{alg:multiplication}.

\paragraph{Error propagation}
\label{par:error_propagation}
We examine errors in SSMs, focusing on how they propagate through computation steps. Fig.~\ref{fig:error_propagation} shows an example from the S4-H3 model performing multi-digit multiplication using CoT prompting. In this case, the model multiplies 9 by 63. It correctly computes \(9 \times 3 = 27\) but mistakenly carries over '3' instead of '2', leading to an incorrect middle digit in the final answer despite correct addition in later steps. This highlights \emph{propagation errors}, where an initial mistake affects later steps. Our analysis shows these errors are 2-4 times more common than local errors, consistent with findings from \citet{dziri2023faith}. This suggests SSMs handle single-step tasks well but struggle with multi-step reasoning, leading to compounded errors.

\begin{figure}[ht]
    \centering
    \begin{minipage}{0.47\textwidth}
        \centering
        \scalebox{1}{
        \includegraphics[width=\textwidth]{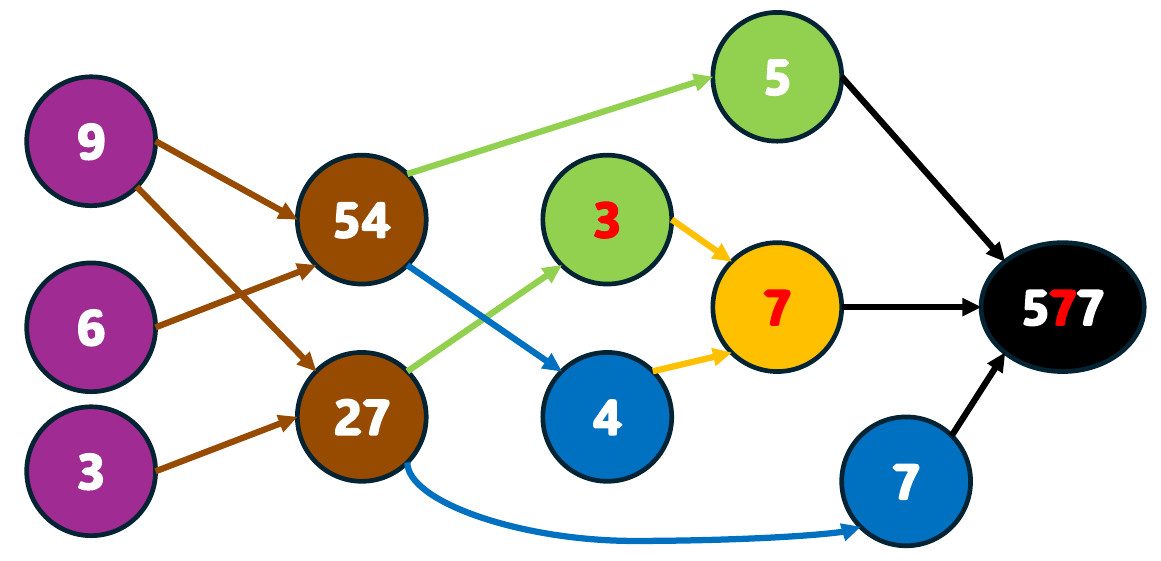}}
        \caption{Error Propagation. Carry operation outputs number 3 instead of 2 from node '27', and that error is further propagated, yielding incorrect solution in the middle digit, although all other steps were done right.}
        \label{fig:error_propagation}
    \end{minipage}
    \hfill
    \begin{minipage}{0.50\textwidth}
        \centering
        \scalebox{1}{
        \includegraphics[width=\textwidth]{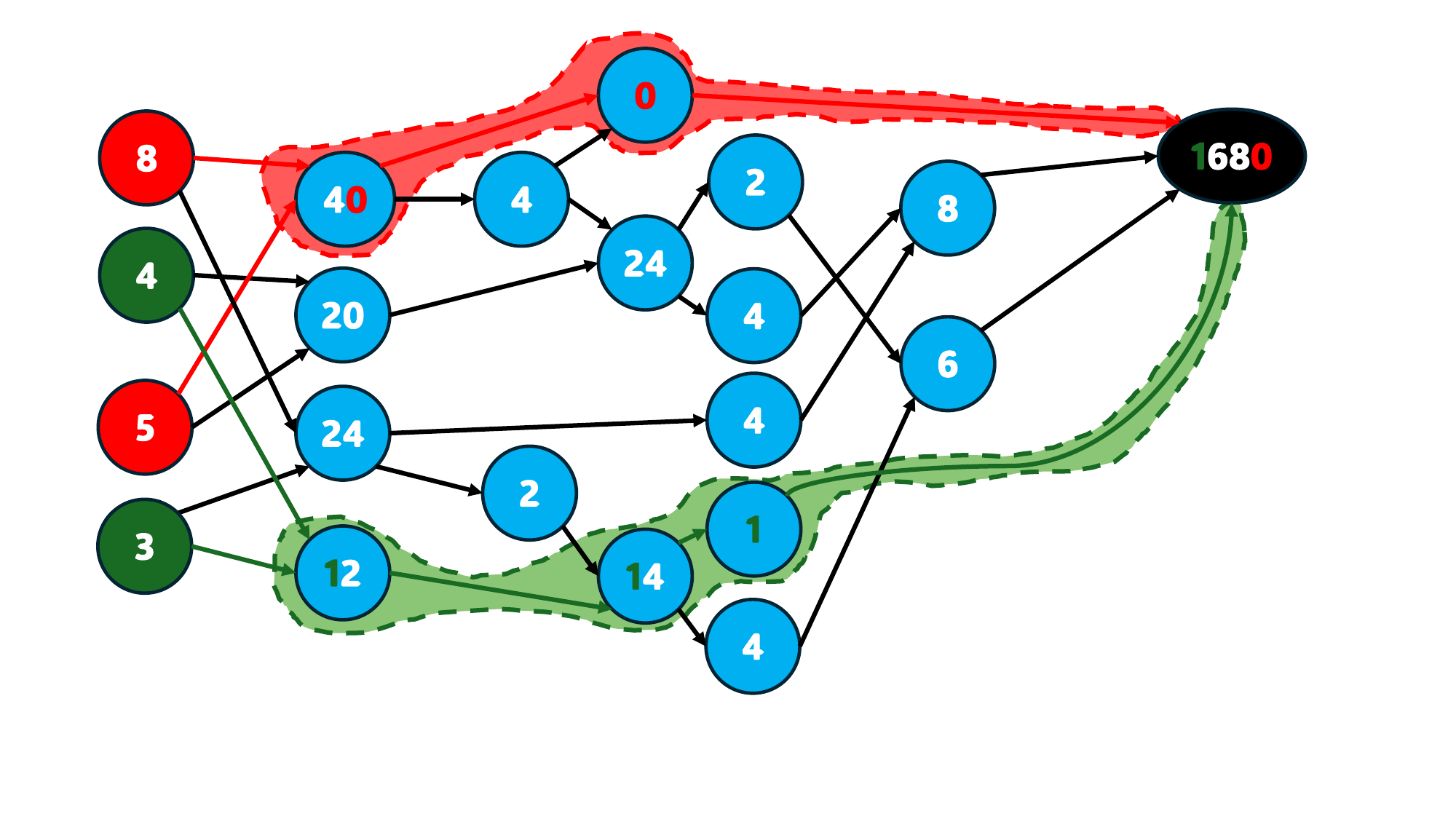}}
        \caption{SSMs and Transformers learn shortcuts that seem to solve function composition but fail with larger inputs and out-of-distribution data.}
        \label{fig:shortcuts}
    \end{minipage}
\end{figure}

\paragraph{SSMs learn shortcuts}
\label{par:ssms_shortcuts}
The performance of SSMs provides valuable insights into their behavior. These models often predict partially correct answers even when the overall response is incorrect. For example, using Mamba \citep{gu2023mamba} for 2-by-2 digit multiplication, the first and last digits are usually accurate. The first two and last two digits in larger multiplications tend to be correct. Using Relative Information Gain (RIG) analysis \citep{dziri2023faith}, we find that SSMs learn shortcuts, performing fewer operations (illustrated by the red and green subgraphs in Fig.~\ref{fig:shortcuts}). This allows them to frequently predict peripheral digits correctly. For instance, the model multiplies 8 and 5 to compute the last digit, carrying 0 to the end, mimicking human multiplication, and accurately predicting the last digit. RIG analysis reveals a strong correlation between the first digit (or first two digits) of the output and the first digit (or first two digits) of the input numbers.

These models leverage task distribution shortcuts to guess partial answers without whole multi-step reasoning. Increasing the number of Chain-of-Thought (CoT) steps doesn't constantly improve results, especially for larger input sizes (deeper computation graphs). If the model encounters relevant subgraphs during training, its inference seems highly compositional but is based on shortcuts \citep{Geirhos2020ShortcutLI, liu2023transformers, Tang2023LargeLM, Du2022ShortcutLO}. These experiments indicate that when an output element heavily relies on a few input features, SSMs recognize this correlation during training and map these features to predict the output during testing. This gives the false impression of performing compositional reasoning while bypassing rigorous multi-hop reasoning \citep{Yang2024DoLL}.

\subsection{Learning Algorithmic Compositions}
Finally, we conduct a comprehensive analysis of the capabilities of SSM-based models, along with GPT-4o \citep{openai2023gpt4}, to "learn" discrete algorithms. This analysis is performed using two tasks that require the composition of multiple discrete sub-tasks. By empirically examining the models' algorithmic learning through compositionality testing, we observe their inability to effectively perform these tasks, even when provided with few-shot prompts and CoT examples \citep{wei_nips2022}. This suggests that within the framework of in-context learning, SSM and Transformer-based models fail to attain compositional learning when constrained to a fixed number of samples. Details in the A-\ref{app:sec:algorithmic_compositions}.

\section{Compositional Tasks Details}
\label{app:sec:compositional_tasks_details}
\subsection{Multiplication}
\label{app:subsec:multiplication_overall}
We show examples of few-shot and CoT prompting methods for multiplication task (Figs.~\ref{fig:no_scratchpad_mult} \& \ref{fig:scratchpad}).

 \begin{figure}
    \centering
        \begin{minted}[fontsize=\footnotesize, frame=single,linenos=false,breaklines,breaksymbol=,escapeinside=||,bgcolor=Box3Color]{text}
To multiply two numbers, start by multiplying the rightmost digit of the multiplicand by each digit of the multiplier, writing down the products and carrying over any remainders. Repeat this process for each digit of the multiplicand, and then add up all the partial products to obtain the final result. Here are examples:

Question: what's 32 times 8? Answer 256.
Question: what's 69 times 3? Answer 207.
Question: what's 93 times 6? Answer 558.

Question: what's 76 times 8? Answer:
\end{minted}
\caption{Example prompt for the multiplication task used for the few-shot prompting.}
    \label{fig:no_scratchpad_mult}
\end{figure}

 \begin{figure}
    \centering
        \begin{minted}[fontsize=\footnotesize, frame=single,linenos=false,breaklines,breaksymbol=,escapeinside=||,bgcolor=Box3Color]{text}
|\textbf{Question:}| What is 904 times 74?
    

|\textbf{Scratchpad:}| Let's perform the multiplication step by step:

Let's multiply 904 by the digit in the ones place of 74, which is 4.

1. Multiply 4 by the digit in the ones place of 904, which is 4. This gives 4 x 4 = 16. Write down the result 6 and carry over the 1 to the next step.
2. Multiply 4 by the digit in the tens place of 904, which is 0. Add the carryover from the previous step to account for this. This gives (0 x 4) + 1 = 1. Write down the result 1.
3. Multiply 4 by the digit in the hundreds place of 904, which is 9. This gives 9 x 4 = 36. Write down the result 36.
4. The partial product for this step is A=3616 which is the concatenation of the digits we found in each step.

Now, let's multiply 904 by the digit in the tens place of 74, which is 7.

5. Multiply 7 by the digit in the ones place of 904, which is 4. This gives 4 x 7 = 28. Write down the result 8 and carry over the 2 to the next step.
6. Multiply 7 by the digit in the tens place of 904, which is 0. Add the carryover from the previous step to account for this. This gives (0 x 7) + 2 = 2. Write down the result 2.
7. Multiply 7 by the digit in the hundreds place of 904, which is 9. This gives 9 x 7 = 63. Write down the result 63.
8. The partial product for this step is B=6328 which is the concatenation of the digits we found in each step.

Now, let's sum the 2 partial products A and B, and take into account the position of each digit: A=3616 (from multiplication by 4) and B=6328 (from multiplication by 7 but shifted one place to the left, so it becomes 63280). The final answer is 3616 x 1 + 6328 x 10 = 3616 + 63280 = 66896.
\end{minted}
\caption{A sample scratchpad for the multiplication task.}
    \label{fig:scratchpad}
\end{figure}

\subsection{Einstein's Puzzle}
\label{app:subsec:einsteins_puzzle}
\paragraph{Data Construction}
Following \cite{dziri2023faith}, we first define a set of properties such as "Color", "PhoneModel", and "Pet", along with their corresponding values in natural language templates (e.g., "The house has a red color."). We then create a basic and clear set of clue types:\\
1. \textbf{found\_at}: For example, “Alice lives in House 2.” \\
2. \textbf{same\_house}: For example, “The person who is a cat lover lives in the house that has a red color.”\\
3. \textbf{direct\_left}: For example, “The person who has a dog as a pet lives to the left of the person who lives in a red house.”\\
4. \textbf{besides}: For example, “The person who has a dog as a pet and the person who has a red house live next to each other.”

Additionally, we introduce more challenging clue types for auxiliary experiments, such as \texttt{not\_at}, \texttt{left\_of} (not necessarily directly left), and \texttt{two\_house\_between}. These harder clues are used to test the robustness and versatility of our models.

\paragraph{Graph Construction}
To address the complex compositional reasoning required for a logical grid puzzle, we utilize existing puzzle solvers \cite{leonardo_z3_2008} to generate the computation graph. Our algorithm follows a basic greedy principle: it applies the minimum number of rules necessary to solve any cell. Specifically, if a single rule can solve a cell, that rule is applied.

The algorithm iterates through all clues in the clue set, seeking combinations that can solve any cell in the table. Although this approach may not be the most efficient, it enables models to have explicit scratchpad verbalization via an intuitive computation graph. Fig~\ref{fig:scratchpad:einsteins_puzzle} shows an example of a scratchpad.

\clearpage

 \begin{figure}[H]
    \centering
        \begin{minted}[fontsize=\footnotesize, frame=single,linenos=false,breaklines,breaksymbol=,escapeinside=||,bgcolor=LightRed]{text}
This is a logic puzzle. There are 3 houses (numbered 1 on the left, 3 on the right). Each has a different person in them. They have different characteristics:
- Each person has a unique name: peter, eric, arnold
- People have different favorite sports: soccer, tennis, basketball
- People own different car models: tesla model 3, ford f150, toyota camry

1. The person who owns a Ford F-150 is the person who loves tennis.
2. Arnold is in the third house.
3. The person who owns a Toyota Camry is directly left of the person who owns a Ford F-150.
4. Eric is the person who owns a Toyota Camry.
5. The person who loves basketball is Eric.
6. The person who loves tennis and the person who loves soccer are next to each other.

Let's think step by step. Please first briefly talk about your reasoning and show your final solution by filling the blanks in the below table.

$ House: ___ $ Name: ___ $ Sports: ___ $ Car: ___ 
$ House: ___ $ Name: ___ $ Sports: ___ $ Car: ___ 
$ House: ___ $ Name: ___ $ Sports: ___ $ Car: ___ 

Reasoning: 
Step 1: First apply clue <Arnold is in the third house.> We know that The Name in house 3 is arnold.
Step 2: Then combine clues: <The person who loves tennis and the person who loves soccer are next to each other.> <The person who loves basketball is Eric.>  Unique Values Rules and the fixed table structure. We know that The Name in house 1 is eric. The FavoriteSport in house 1 is basketball. The Name in house 2 is peter.
Step 3: Then apply clue <Eric is the person who owns a Toyota Camry.> We know that The CarModel in house 1 is toyota camry.
Step 4: Then apply clue <The person who owns a Toyota Camry is directly left of the person who owns a Ford F-150.> and Unique Values We know that The CarModel in house 2 is ford f150. The CarModel in house 3 is tesla model 3.
Step 5: Then apply clue <The person who owns a Ford F-150 is the person who loves tennis.> and Unique Values We know that The FavoriteSport in house 2 is tennis. The FavoriteSport in house 3 is soccer.
The puzzle is solved.

Final solution:
$ House: 1 $ Name: Eric   $ Sports: Basketball $ Car: Camry 
$ House: 2 $ Name: Peter  $ Sports: Tennis     $ Car: Ford 
$ House: 3 $ Name: Arnold $ Sports: Soccer     $ Car: Tesla 

\end{minted}
\caption{A sample scratchpad for the Einstein's puzzle task.}
    \label{fig:scratchpad:einsteins_puzzle}
\end{figure}

\subsection{Dynamic Programming}
\label{app:subsec:dynamic_prog}
We show examples of zero/few-shot and CoT prompting methods for dynamic programming task (Figs.~\ref{fig:no_scratchpad_dp} \& \ref{fig:scratchpad_dp_2}), following \cite{dziri2023faith}.

 \begin{figure}[H]
    \centering
        \begin{minted}[fontsize=\footnotesize, frame=single,linenos=false,breaklines,breaksymbol=,escapeinside=||,bgcolor=lightpurple]{text}
Given a sequence of integers, find a subsequence with the highest sum, such that no two numbers in the subsequence are adjacent in the original sequence.

Output a list with "1" for chosen numbers and "2" for unchosen ones. If multiple solutions exist, select the lexicographically smallest. input = [3, 2, 1, 5, 2].
\end{minted}
\caption{Example prompt for the DP task, used for zero-shot and few-shot settings.}
    \label{fig:no_scratchpad_dp}
\end{figure}

 \begin{figure}[H]
    \centering
        \begin{minted}[fontsize=\footnotesize, frame=single,linenos=false,breaklines,breaksymbol=,escapeinside=||,bgcolor=lightpurple]{text}
Question: Let's solve input = [3, 2, 1, 5, 2].


Scratchpad: dp[4] = max(input[4], 0) = max(2, 0) = 2
dp[3] = max(input[3], input[4], 0) = max(5, 2, 0) = 5
dp[2] = max(dp[3], input[2] + dp[4], 0) = max(5, 1 + 2, 0) = 5
dp[1] = max(dp[2], input[1] + dp[3], 0) = max(5, 2 + 5, 0) = 7
dp[0] = max(dp[1], input[0] + dp[2], 0) = max(7, 3 + 5, 0) = 8

Finally, we reconstruct the lexicographically smallest subsequence that fulfills the task objective by selecting numbers as follows. We store the result on a list named "output".

Let can_use_next_item = True.
Since dp[0] == input[0] + dp[2] (8 == 3 + 5) and can_use_next_item == True, we store output[0] = 1. We update can_use_next_item = False.
Since dp[1] != input[1] + dp[3] (7 != 2 + 5) or can_use_next_item == False, we store output[1] = 2. We update can_use_next_item = True.
Since dp[2] != input[2] + dp[4] (5 != 1 + 2) or can_use_next_item == False, we store output[2] = 2. We update can_use_next_item = True.
Since dp[3] == input[3] (5 == 5) and can_use_next_item == True, we store output[3] = 1. We update can_use_next_item = False.
Since dp[4] != input[4] (2 != 2) or can_use_next_item == False, we store output[4] = 2.

Reconstructing all together, output=[1, 2, 2, 1, 2].

\end{minted}
\caption{A sample scratchpad for the DP task.}
    \label{fig:scratchpad_dp_2}
\end{figure}

\section{Details of CoT experiments}
\label{app:sec:details_cot_experiments}

\subsection{Main CoT experiments}
\label{app:sec:details_cot_experiments:subsec:main_cot}
We plot the performance of Jamba \cite{lieber_arxiv_2024} on multiplication and puzzle tasks and various models on DP tasks after using CoT. 
\begin{figure}[ht!]
    \centering
    \begin{subfigure}[t]{0.35\textwidth}
        \centering
        \includegraphics[width=\textwidth]{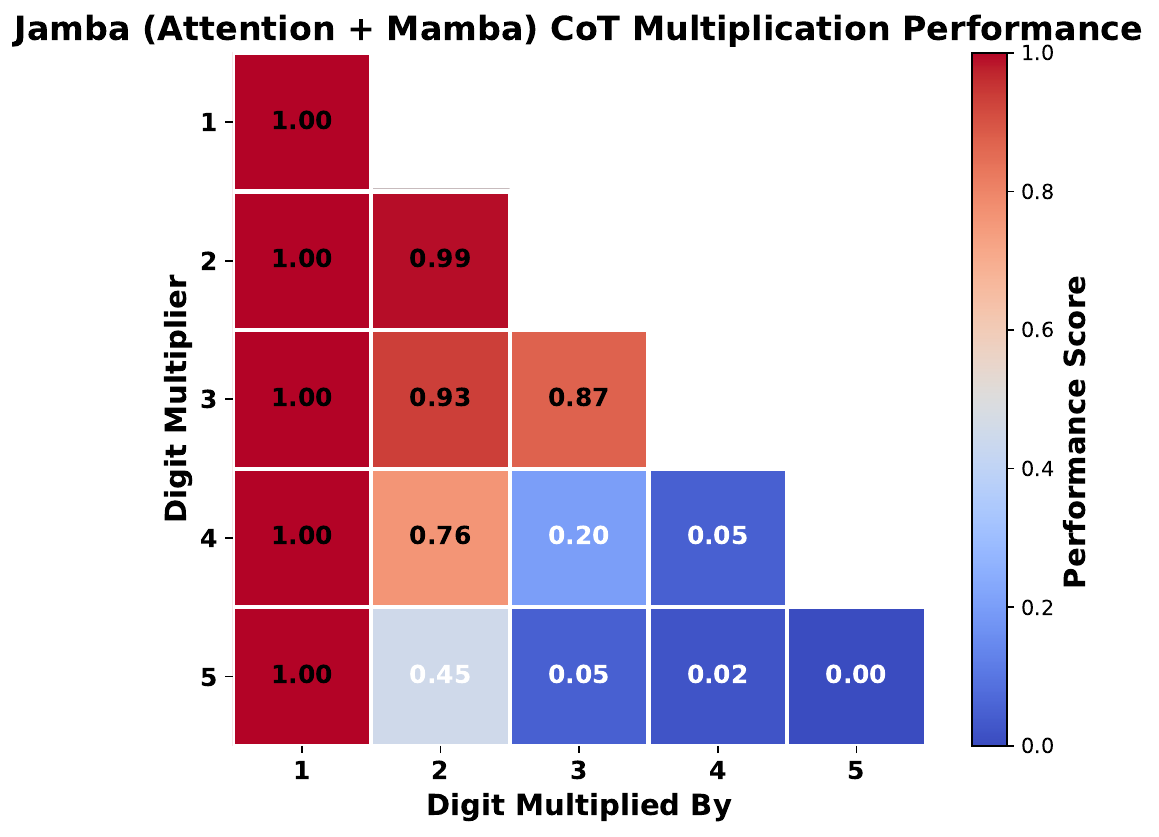}
        \label{fig:plot_scratchpad_multiplication}
    \end{subfigure}%
    \hfill
    \begin{subfigure}[t]{0.325\textwidth}
        \centering
        \includegraphics[width=\textwidth]{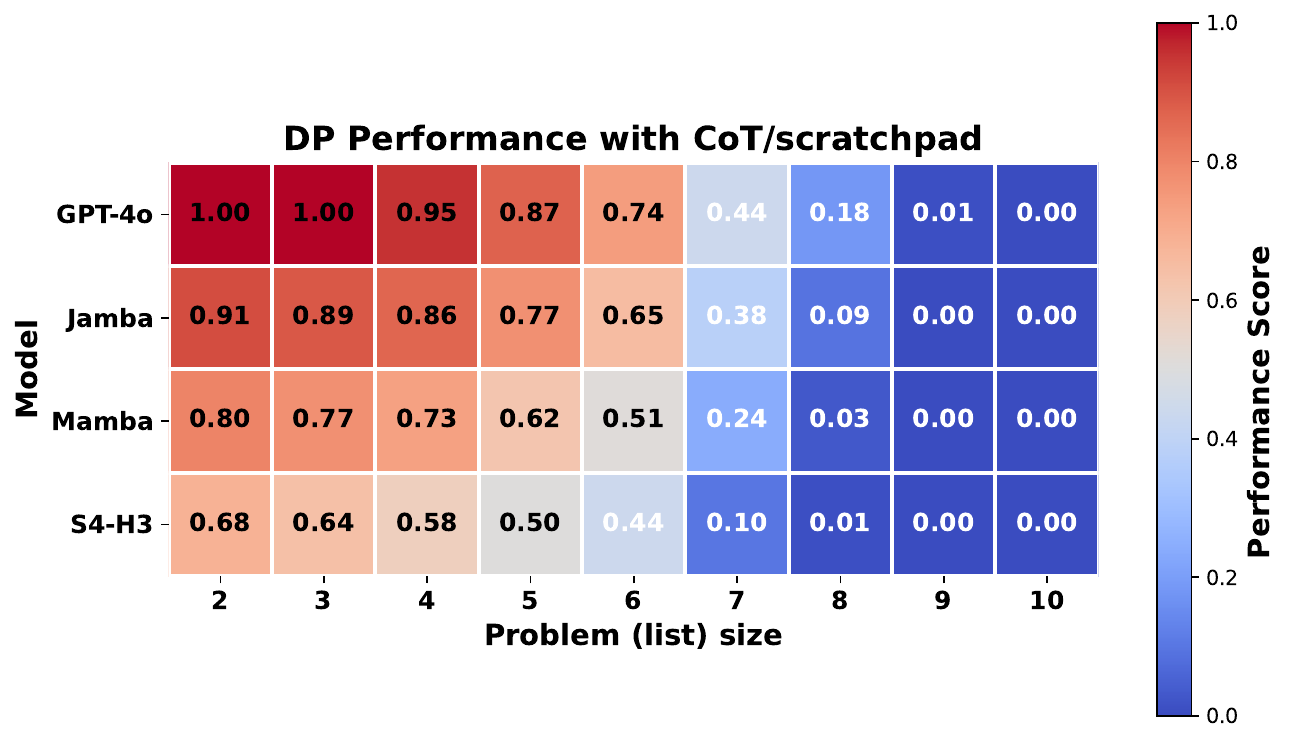}
        \label{fig:plot_scratchpad_dp}
    \end{subfigure}%
    \hfill
    \begin{subfigure}[t]{0.32\textwidth}
        \centering
        \includegraphics[width=\textwidth]{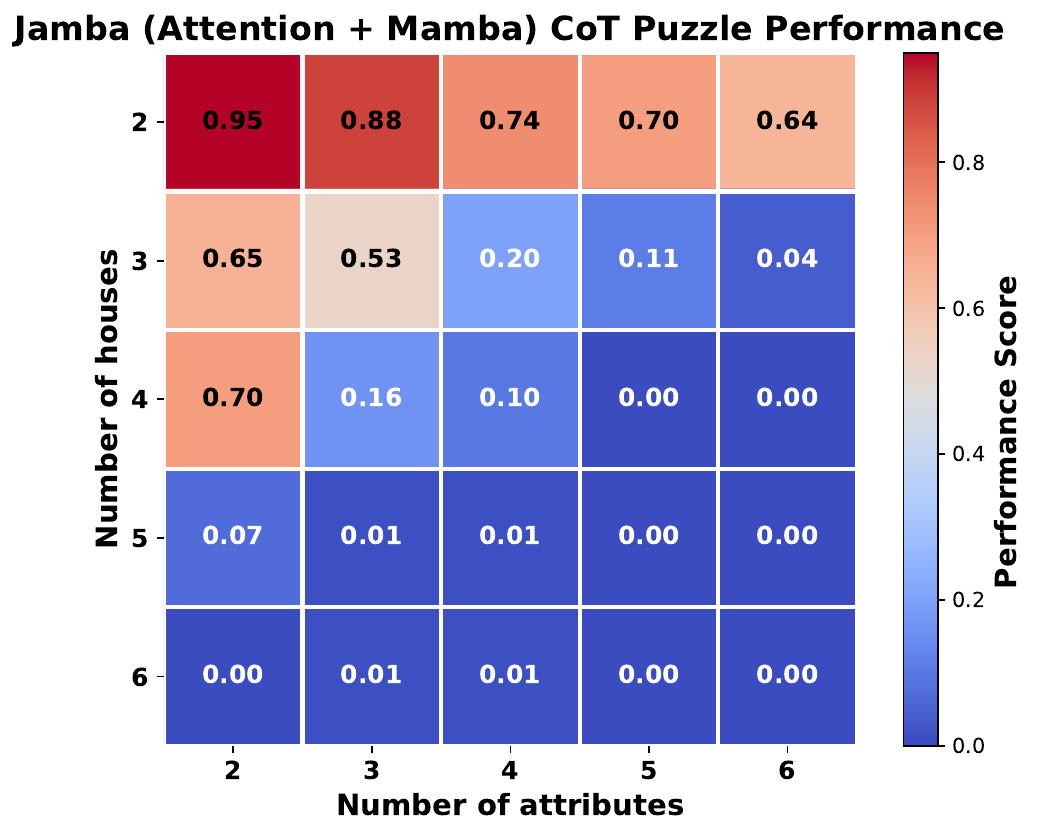}
        \label{fig:plot_scratchpad_ein}
    \end{subfigure}
    \vspace{-5mm}
    \caption{Jamba's \cite{lieber_arxiv_2024} performance on multiplication and puzzle tasks improves with CoT, though not fully solved. Other models were tested on the DP task, where they failed at higher input sizes, despite CoT.}
    \label{fig:compositional_plots_cot}
\label{fig:compositional_tasks_cot}
\end{figure}

The leftmost heatmap on Fig.~\ref{fig:compositional_tasks_cot} represents the Jamba \cite{lieber_arxiv_2024} model's multiplication performance, showing a consistently high performance for multipliers of 1 and 2, but a noticeable decline as the multipliers increase, particularly beyond 3. The middle heatmap compares the performance of four models—GPT-4o \cite{openai2023gpt4}, Jamba \cite{lieber_arxiv_2024}, Mamba \cite{gu2023mamba}, and S4-H3 \cite{gu2022efficiently, fu2023hungry}—on dynamic programming tasks with CoT prompting \cite{wei_nips2022}. GPT-4o \cite{openai2023gpt4} consistently outperforms the other models, maintaining high performance even for larger problem list sizes, while the performance of the other models decreases more rapidly. The rightmost heatmap displays Jamba's puzzle-solving performance, indicating high accuracy for simpler puzzles with fewer attributes but a steep decline as the complexity increases. These visualizations highlight that while CoT prompting \cite{wei_nips2022} generally enhances model performance; its effectiveness varies significantly across different models and task complexities.

\subsection{Performance of other models on multiplication and puzzle tasks}
\label{app:sec:details_cot_experiments:subsec:other_models_cot}
We observe the same pattern on both tasks, for all the models - Figs.~\ref{fig:multiplication_cot_models} \& \ref{fig:puzzle_cot_models}. GPT-4o \cite{openai2023gpt4} is always the best model, followed by Jamba \cite{lieber_arxiv_2024}, then Mamba \cite{gu2023mamba}, then S4-H3 \cite{fu2023hungry, gu2022efficiently}. While CoT helps, it is not enough to solve the task.
\begin{figure}[ht!]
    \centering
    \begin{subfigure}[t]{0.33\textwidth}
        \centering
        \includegraphics[width=\textwidth]{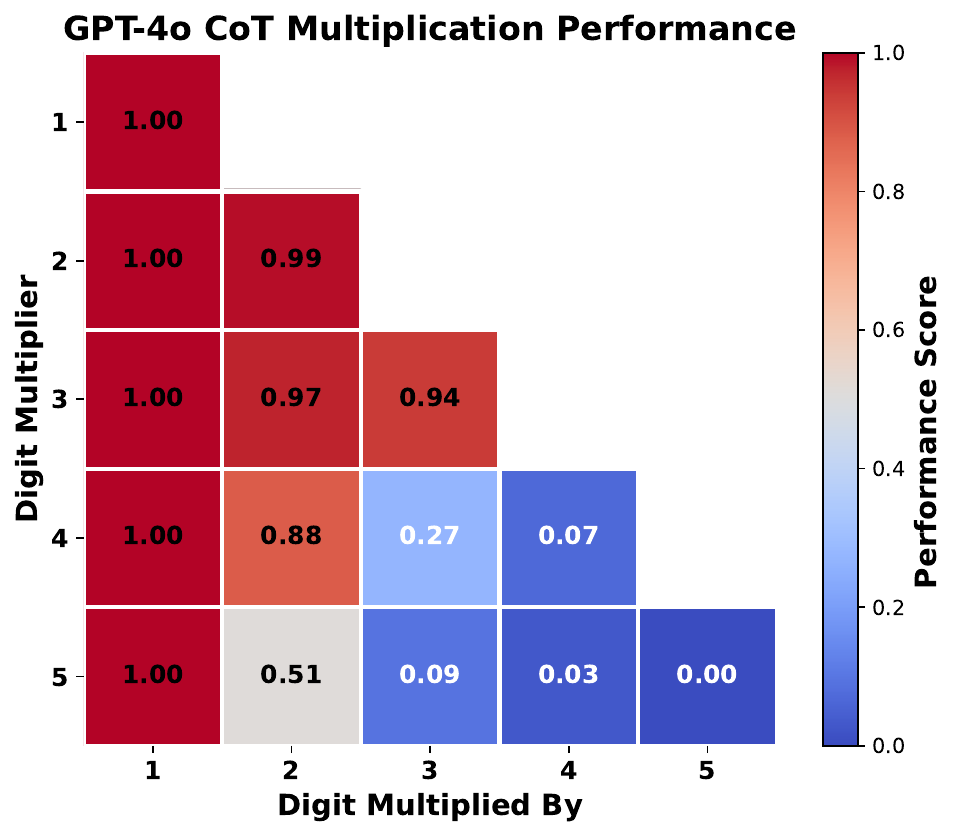}
    \end{subfigure}%
    \hfill
    \begin{subfigure}[t]{0.33\textwidth}
        \centering
        \includegraphics[width=\textwidth]{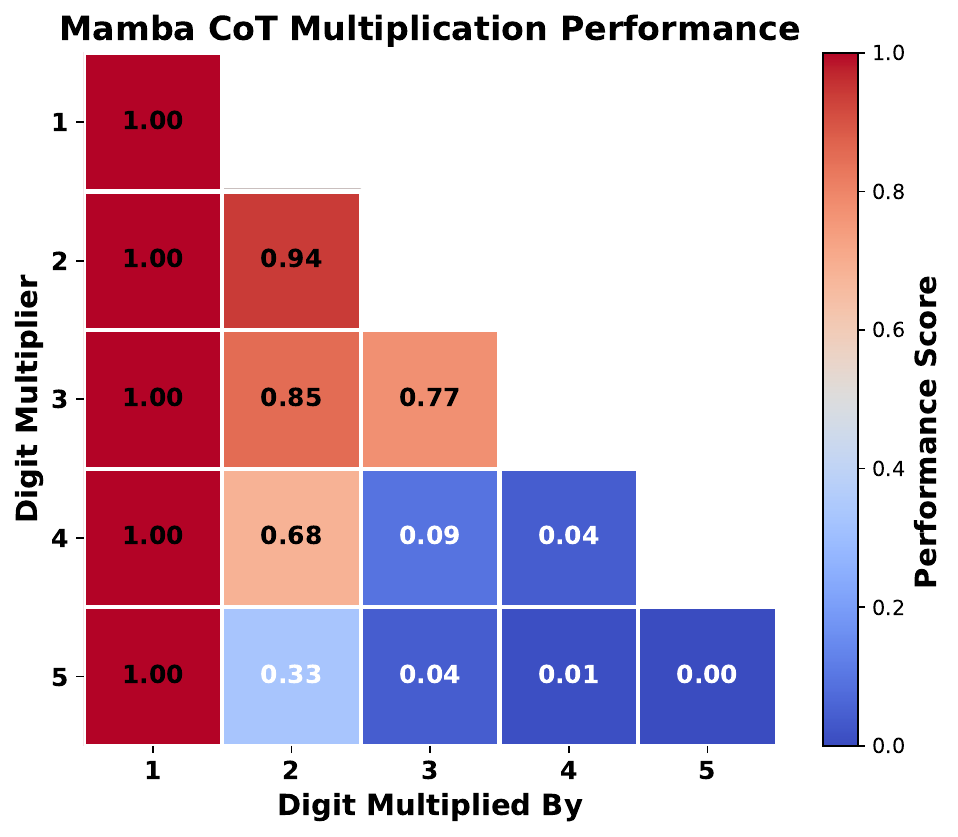}
    \end{subfigure}%
    \hfill
    \begin{subfigure}[t]{0.33\textwidth}
        \centering
        \includegraphics[width=\textwidth]{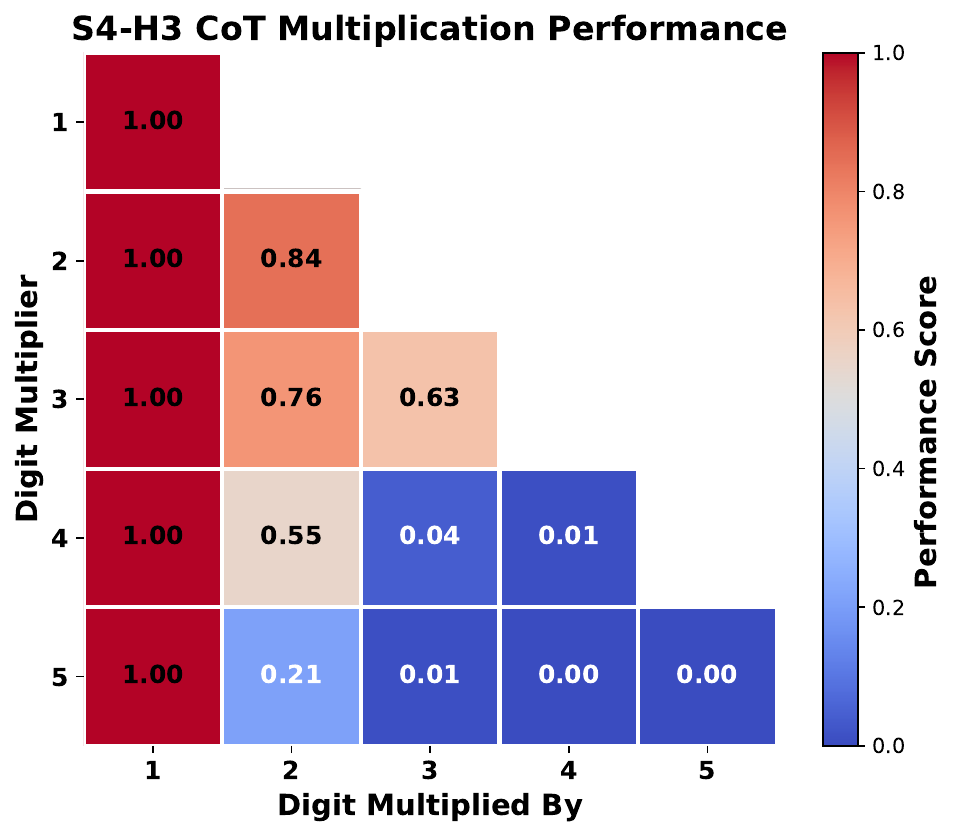}
    \end{subfigure}
    \vspace{-2mm}
    \caption{Comparison of different models on multiplication task using CoT.}
\label{fig:multiplication_cot_models}
\end{figure}

\begin{figure}[ht!]
    \centering
    \begin{subfigure}[t]{0.33\textwidth}
        \centering
        \includegraphics[width=\textwidth]{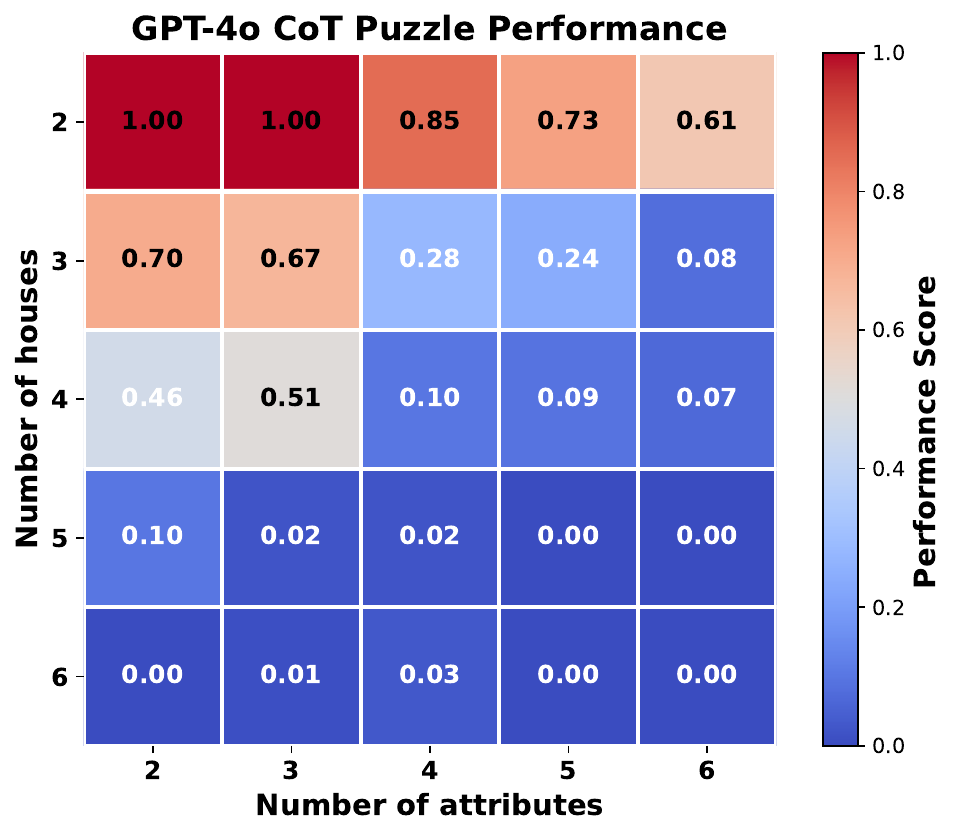}
    \end{subfigure}%
    \hfill
    \begin{subfigure}[t]{0.33\textwidth}
        \centering
        \includegraphics[width=\textwidth]{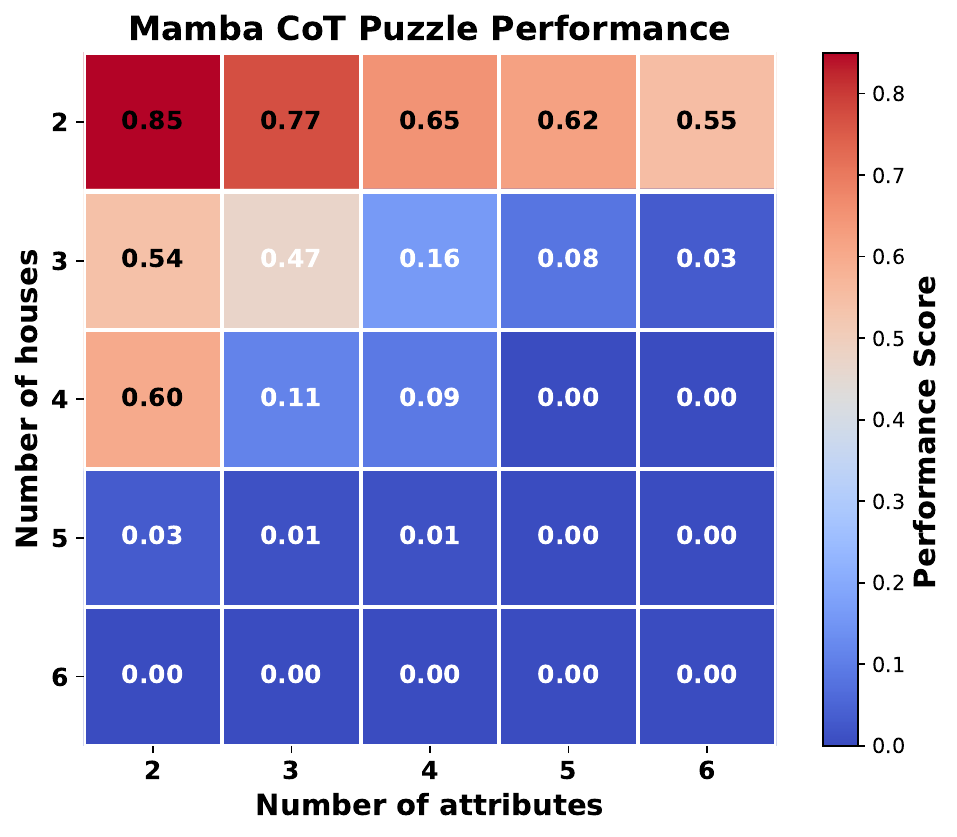}
    \end{subfigure}%
    \hfill
    \begin{subfigure}[t]{0.33\textwidth}
        \centering
        \includegraphics[width=\textwidth]{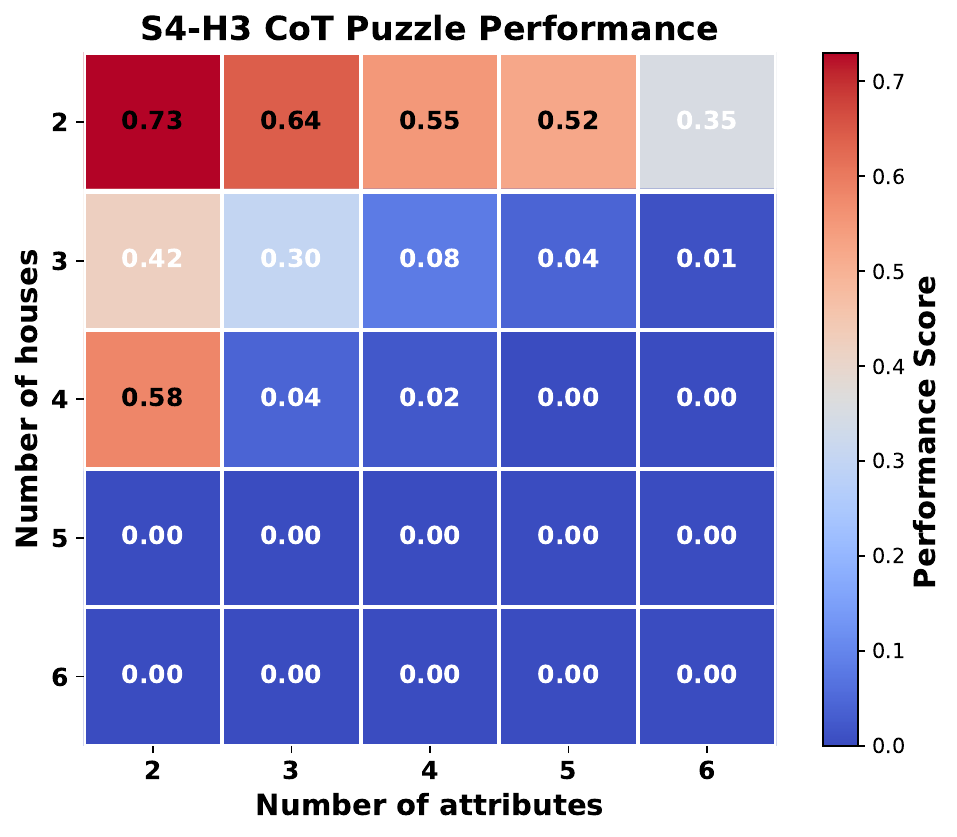}
    \end{subfigure}
    \vspace{-2mm}
    \caption{Comparison of different models on puzzle task using CoT.}
\label{fig:puzzle_cot_models}
\end{figure}

\subsection{Few-shot prompting multiplication results}
\label{app:sec:details_cot_experiments:subsec:few_shot_multiplication}
We investigate whether few-show prompting (giving a model few input/output pairs) and then asking for the answer to the new problem help. Fig.~\ref{fig:few_shot_models} shows the results, and consistently CoT outperforms Few-shot prompting, and Few-shot prompting outperforms Zero-shot prompting.

\clearpage
    
\begin{figure}[ht!]
    \centering
    \begin{subfigure}[t]{0.33\textwidth}
        \centering
        \includegraphics[width=\textwidth]{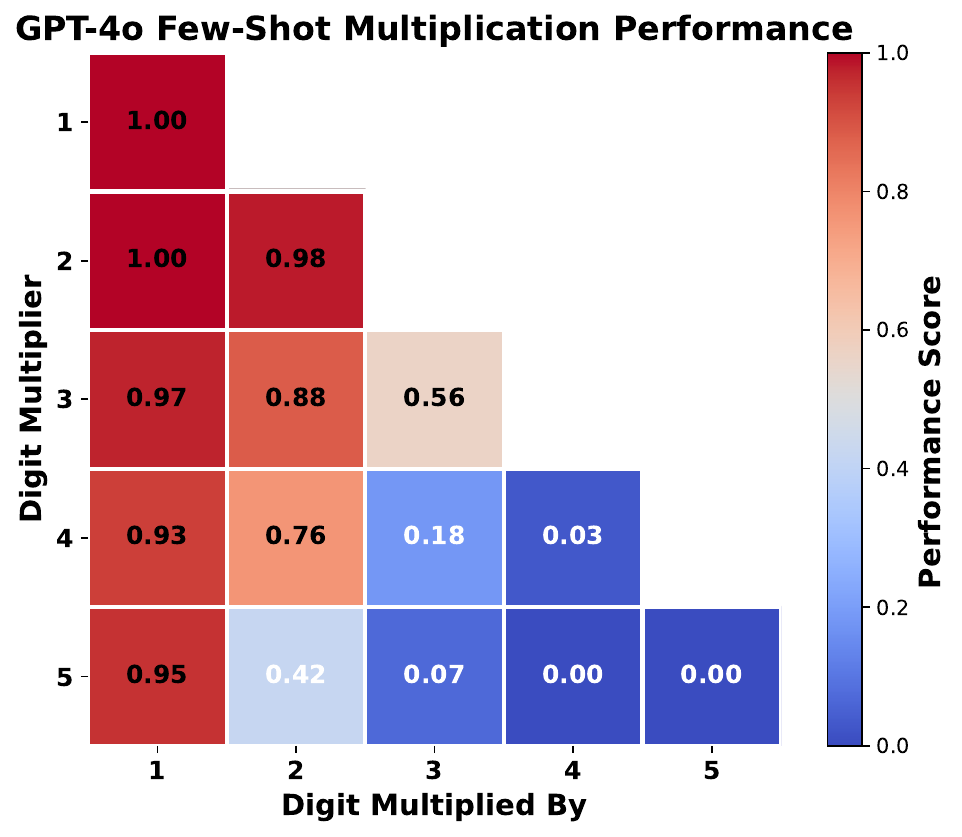}
    \end{subfigure}%
    \hfill
    \begin{subfigure}[t]{0.33\textwidth}
        \centering
        \includegraphics[width=\textwidth]{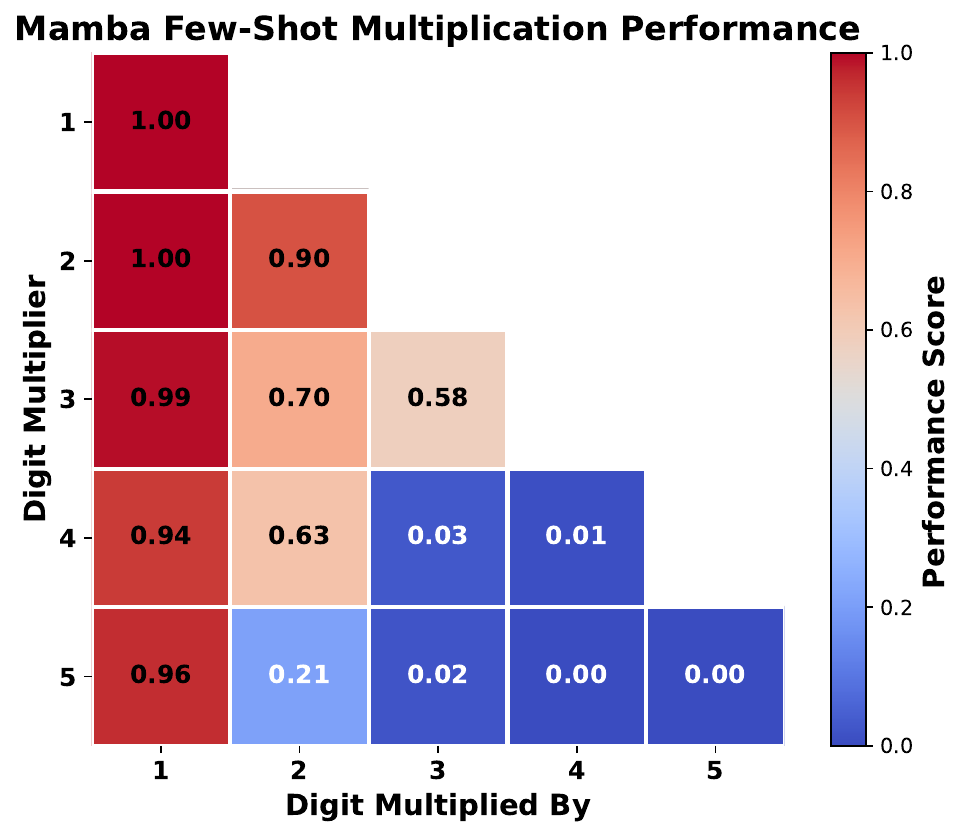}
    \end{subfigure}%
    \hfill
    \begin{subfigure}[t]{0.33\textwidth}
        \centering
        \includegraphics[width=\textwidth]{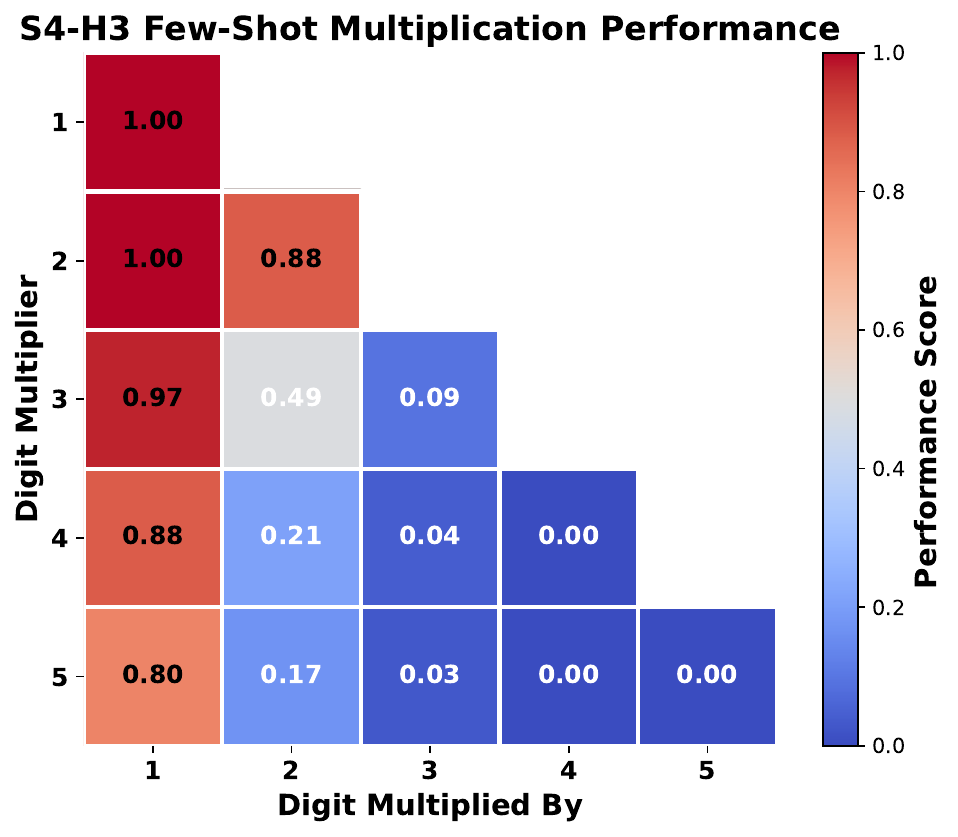}
    \end{subfigure}
    \vspace{-2mm}
    \caption{Comparison of different models on multiplication task using few-shot prompting.}
\label{fig:few_shot_models}
\end{figure}

\section{Algorithmic Compositions}
\label{app:sec:algorithmic_compositions}
Following \cite{thomm2024limits}, we evaluate the models using the \textbf{PE's Neighbour (PEN)} task. This task involves navigating from one word to the next based on a specified matching criterion and outputting all the neighbors encountered along the way. The PEN task, inspired by \cite{abnar2023adaptivity} and rooted in the Pointer Value Retrieval framework \cite{zhang2022pointer}, is particularly compelling due to its four sub-tasks, which test the necessary sub-operations for PEN. These sub-tasks are: (i) Copy words, (ii) Reverse Copy (copying words in reverse order, where words consist of multiple tokens), (iii) PE (outputting words in the matching chain instead of neighbors), and (iv) PE Verbose (PEV) - outputting both the words of the matching sequence and their neighbors.
These sub-tasks are essential because, to predict the next word, the model must: take the current word in the answer, obtain the left neighbor (learned in Reverse Copy), match it (learned in PE), and then obtain the right neighbor (learned in Copy). PEV is considered a sub-task because it requires solving the same problem as PEN but with the added complexity of providing both matching words and their neighbors. PEN, on the other hand, only requires outputting the neighbors. For accurate next-token prediction the model cannot simply replicate the last matching sequence word from the previous answers, it must first infer it from the neighbour.
To increase the task's complexity, "attention traps" or "doppelgangers" are introduced. These traps create additional matching possibilities by allowing each neighbor to match two other words, thus tempting the model to match from the neighbor of a matching sequence instead. This added layer of difficulty further challenges the models' ability to learn and compose discrete algorithms effectively. \textbf{Pointer Execution Reverse Multicount (PER Multi)} shares conceptual similarities with the PEN task; however, instead of matching forward and predicting the current word or its neighbor, the task involves first outputting the last word in the matching sequence and then proceeding backward. Consequently, to accurately predict the first word, the model must identify the end of the matching sequence and output that word. The model needs to count the total number of matchings and the number of matchings that align to the left in the given word order. The answer requires multiplying these two counts, introducing a non-linearity. For this task, we omit any attention traps, as there are no neighbors involved. In the A-~\ref{app:sec:algorithmic_compositions} we show concrete prompt examples and share the code.

We conducted extensive evaluations on 500 test samples using various models under different conditions: zero-shot, few-shot (providing a limited number of input-output pairs), and CoT prompting \cite{wei_nips2022}. Remarkably, none of the models, including the state-of-the-art GPT-4o \cite{openai2023gpt4}, succeeded in solving the PEN task \cite{thomm2024limits}. Typically, models correctly generated the initial strings but then halted prematurely or produced random strings. The same pattern of failure was observed with the PER Multi-task. Specifically, GPT-4o achieved only 1\% and 9\% accuracy using few-shot and CoT prompting, respectively, failing to solve the task. The marginal success of GPT-4o is attributed to its substantially larger parameter count compared to SSM-based models (\ref{par:number_of_parameters}).

\begin{figure}[H]
\centering
\begin{minipage}{0.47\textwidth}
    \centering
    \captionof{table}{Model Accuracy for PEN task}
    \scalebox{0.70}{
    \begin{tabular}{ccc}
    \toprule
    \textbf{Model} & \textbf{Prompt Setting} & \textbf{Accuracy [\%]} \\
    \midrule
    \multirow{3}{*}{GPT-4o} & Zero-shot & 0.00 \\
                            & Few-shot & 0.00 \\
                            & CoT & 0.00 \\
    \midrule
    \multirow{3}{*}{Jamba}  & Zero-shot & 0.00 \\
                            & Few-shot & 0.00 \\
                            & CoT & 0.00 \\
    \midrule
    \multirow{3}{*}{Mamba}  & Zero-shot & 0.00 \\
                            & Few-shot & 0.00 \\
                            & CoT & 0.00 \\
    \midrule
    \multirow{3}{*}{S4-H3}  & Zero-shot & 0.00 \\
                            & Few-shot & 0.00 \\
                            & CoT & 0.00 \\
    \bottomrule
    \end{tabular}}
\end{minipage}\hfill
\begin{minipage}{0.47\textwidth}
    \centering
    \captionof{table}{Model Accuracy for PER Multi task}
    \scalebox{0.70}{
    \begin{tabular}{ccc}
    \toprule
    \textbf{Model} & \textbf{Prompt Setting} & \textbf{Accuracy [\%]} \\
    \midrule
    \multirow{3}{*}{GPT-4o} & Zero-shot & 0.00 \\
                            & Few-shot & 0.01 \\
                            & CoT & 0.09 \\
    \midrule
    \multirow{3}{*}{Jamba}  & Zero-shot & 0.00 \\
                            & Few-shot & 0.00 \\
                            & CoT & 0.00 \\
    \midrule
    \multirow{3}{*}{Mamba}  & Zero-shot & 0.00 \\
                            & Few-shot & 0.00 \\
                            & CoT & 0.00 \\
    \midrule
    \multirow{3}{*}{S4-H3}  & Zero-shot & 0.00 \\
                            & Few-shot & 0.00 \\
                            & CoT & 0.00 \\
    \bottomrule
    \end{tabular}}
\end{minipage}
\end{figure}

In the following subsections, we focus on showing the prompts in few-shot and CoT settings for PEN and PER Multitasks. Moreover, we show the code we used to generate the samples.
\subsection{Prompts for SSM and Attention-based models for PEN task}
\label{app:sec:algorithmic_compositions:subsec:pen_task_prompts}
 \begin{figure}[H]
    \centering
        \begin{minted}[fontsize=\footnotesize, frame=single,linenos=false,breaklines,breaksymbol=,escapeinside=||,bgcolor=LightBlue]{text}
Example: eg jy vm3zc si2zf nn4ll zf5ka ki7xd ew0si xp3og il5js xn6yx my7ec xu2gb if2my fy3so ec2il ob5ch kt5if zc4xp ka3mj og1ud zf2ka yh3ux hx2kt vc2pf jy4qd lj1xu wy5hx bd4xa my4ec at1kb jy3qd ux1fl ew3si ds2qz qd7ew xa1ay si1zf ch4lj js3rf fl6xn mj7wy zy6rq zh2gu bj3rb if0my pg5ds yv3hs zu3ob ta7qi ji2bj mj1wy rq7ul mn3fw ay4qu kt2if kr3qb pr0ah tg0at uc1vx xd1pd wy4hx dr6fy mk0vj sm0pg jl2mo rb1bd il2js vn6kr km4aq eg7nn ka6mj qu4vc hx7kt ll2lb ec6il ud2vn di3xs pd6ji qd6ew yx7zu rh4qn lb1ki js5rf iv3yh jj0fa kb3sm lh6yk so0iv bx6rs qz1vm mw7bm gb2xo uy0ms qb2zy zm0pz xo4tg zx5jm

Answer: jy ka6mj zf5ka ec6il js5rf ew0si wy4hx qd6ew mj1wy if0my il2js my4ec si1zf kt2if hx7kt jy4qd

<FEW MORE EXAMPLES>

Your question: ey wt kj5yo jz0aa nu4yw gp2ro mv6kj nk2qz tr3mp ro7rk tu5xj rk0sj ad2lx up3vd ta7rv qz6ob rc7nt aa4nk mb6mm ob7us jw5wb wt4jz nn4sr wt0jz ev0fa gp1ro sr1nu sj0ku xs0ta us5up mp6jw vd1gp xj3cs sj7ku ol3vv vd3gp wd2mv wr4cz dg0py ro5rk jt6ev bv0cf yb2qv ch2ss xa3be nb5id lx4jt dz5ht wb5wd fb3ax fa0tu jn5ps rv7qj qa7el rn7ad lz3fk mm1tr yd3lv nt0xs lh4zk mr3ou ja5sn gi5ub rk4sj wm7zm jz3aa be4mb kw3bh qj4xa cg0mi jl2rn kv1wg qt5mr ye3kg yr5ol nk7qz ub1dg ob3us cs7so gw4vk ey4wm qz2ob qv4jl xz4hc li0yb oy4qu zm2yr up7vd ou7li rx4wc yw7gi aa2nk yo3qt yz5cx vv6nn us7up

Clearly mark your answer by writing 'Answer: <your answer>' as last line.
\end{minted}
\caption{Prompt for the PEN task, showcasing few-shot learning examples. Each word's start and end are encoded as distinct tokens, so a model can pattern-match the respective token to do the matching operation.}
    \label{app:fig:pen_task_prompt}
\end{figure}

\begin{adjustbox}{width=\textwidth}
\begin{tcolorbox}[mybox, title={Prompt for the PEN task with few-shot CoT examples and a description.}]
I give you a sequence of words. Each word has four characters plus a middle, words are separated by spaces. Start with the leftmost word. Output its neighbor. Then, match the last two characters of the current word (i.e., not the neighbor) to the word starting with those two characters. Again, output the neighbor. Do this until your current word (not the neighbor) has no match anymore. \\

\textbf{Example:} xh jz qw4se zs1qh xv4vn me3af vs1nh ok3ks sn6iv qh1va da5gy ks1ew tw7ik em5zs xs5qu ft3me gt3bc em3zs zn5qv ks5ew by7kn me7af je0wt cb0ft pw6hg rk7cb dv2sn ew3rk yg1by va1cq qu7fp qh4va vn5zn ok1ks cc7tw rk0cb bc7qi jz7em qz2cs ew6rk qv6gt ft7me fp1qw sa6ok sd7pn jz3em wi3da cq7sa iv0vl zs7qh vl2kc va5cq fe5wi xl1zh hg0dv cq4sa ja2nb wh5vv ot4sh qe0jx yt6xs vc0qx nb1am rf2zl kn5hq xg5hk mz7yg aq3uw xh7pw sa7ok wt5ot io6hd pn1je lo6vx hq5cc wp6fc cs7fe yw2ka gy3sd nr0ry am3yt pl0rl ik0tn ub5tq sh0ja ee2it nh6qz xz1ma se0rx is7rn kc1xv cb6ft rx2mz wj7qf. \\
The leftmost word is \textbf{xh}. Its right neighbor is \textbf{jz}, so the first output word is \textbf{jz}. \\
Now, we need to find a word that starts with \textbf{xh}. The word is \textbf{xh7pw}. Its right neighbour is \textbf{sa7ok}, so the next output word is \textbf{sa7ok}. \\
Now, we need to find a word that starts with \textbf{pw}. The word is \textbf{pw6hg}. Its right neighbour is \textbf{rk7cb}, so the next output word is \textbf{rk7cb}. \\
Now, we need to find a word that starts with \textbf{hg}. The word is \textbf{hg0dv}. Its right neighbour is \textbf{cq4sa}, so the next output word is \textbf{cq4sa}. \\
Now, we need to find a word that starts with \textbf{dv}. The word is \textbf{dv2sn}. Its right neighbour is \textbf{ew3rk}, so the next output word is \textbf{ew3rk}. \\
Now, we need to find a word that starts with \textbf{sn}. The word is \textbf{sn6iv}. Its right neighbour is \textbf{qh1va}, so the next output word is \textbf{qh1va}. \\
Now, we need to find a word that starts with \textbf{iv}. The word is \textbf{iv0vl}. Its right neighbour is \textbf{zs7qh}, so the next output word is \textbf{zs7qh}. \\
Now, we need to find a word that starts with \textbf{vl}. The word is \textbf{vl2kc}. Its right neighbour is \textbf{va5cq}, so the next output word is \textbf{va5cq}. \\
Now, we need to find a word that starts with \textbf{kc}. The word is \textbf{kc1xv}. Its right neighbour is \textbf{cb6ft}, so the next output word is \textbf{cb6ft}. \\
Now, we need to find a word that starts with \textbf{xv}. The word is \textbf{xv4vn}. Its right neighbour is \textbf{me3af}, so the next output word is \textbf{me3af}. \\
Now, we need to find a word that starts with \textbf{vn}. The word is \textbf{vn5zn}. Its right neighbour is \textbf{ok1ks}, so the next output word is \textbf{ok1ks}. \\
Now, we need to find a word that starts with \textbf{zn}. The word is \textbf{zn5qv}. Its right neighbour is \textbf{ks5ew}, so the next output word is \textbf{ks5ew}. \\
Now, we need to find a word that starts with \textbf{qv}. The word is \textbf{qv6gt}. Its right neighbour is \textbf{ft7me}, so the next output word is \textbf{ft7me}. \\
Now, we need to find a word that starts with \textbf{gt}. The word is \textbf{gt3bc}. Its right neighbour is \textbf{em3zs}, so the next output word is \textbf{em3zs}. \\
Now, we need to find a word that starts with \textbf{bc}. The word is \textbf{bc7qi}. Its right neighbour is \textbf{jz7em}, so the next output word is \textbf{jz7em}. \\
There is no word that starts with \textbf{qi}, so we are done with the matching.\\
\textbf{Therefore the answer is:} jz sa7ok rk7cb cq4sa ew3rk qh1va zs7qh va5cq cb6ft me3af ok1ks ks5ew ft7me em3zs jz7em.\\

<FEW MORE EXAMPLES> \\

\textbf{Your question:} ap cb ch5ya gb6lt uu6le vn0pc og0ef md6ki jx0ph md4ki mq5ox vp1rx zp1xj is5am uq5fb te3rz eq3he cb0md he2zp fe2re ef6yp vn5pc ui3yt kb1ji qg2mq am4vp ez3eq lt5fi hw4eg lz2te wn5kd kb2ji le6wk vp3rx yt3lq rx6gb ey4dx ji3fe lq1dq lz0te wk7sl am6vp zi0up ki5kb ek7uu re0vq cs3ez vq5lz dx6se lt3fi xp2km fe3re bz7hw rx2gb yp6qg gb4lt at4cs fi7vn ox1nl fi5vn ph3zi rz4is kd2bz ji1fe nl3kk ki2kb yo6ey te1rz fd5at qb7ia bn2xp cb4md ya2wn gd7sq xj2jg rp6bl ap1bn is4am se5ui re5vq eg4uq cf6fj fb6jx ll4ic sl4ch qs3nf sp5fd qj6bf dq1og rz1is km6yo vq3lz up5sp wc5iv \\
\textbf{Reason step by step. Clearly mark your answer by writing 'Answer: <your answer>' as last line.}
\end{tcolorbox}
\end{adjustbox}

\subsection{PEN generation code}
\label{app:sec:algorithmic_compositions:subsec:pen_generation_code}

\begin{figure}[H]
{
\fontsize{5.3pt}{5.3pt}\selectfont
\begin{minted}{python}
import itertools
import numpy as np
letter_chars = list("abcdefghijklmnopqrstuvwxyz")
big_letter_chars = list("ABCDEFGHIJKLMNOPQRSTUVWXYZ")
number_chars = list("0123456789")
class DataConfig:
    def __init__(self, min_len, max_len, min_hops, max_hops, learn_mode, ambiguous, no_green_confusion):
        self.min_len = min_len
        self.max_len = max_len
        self.min_hops = min_hops
        self.max_hops = max_hops
        self.learn_mode = learn_mode
        self.ambiguous = ambiguous
        self.no_green_confusion = no_green_confusion
    def get(self, key, default):
        return getattr(self, key, default)
class PointerExecutionNeighbour:
    def __init__(self, data_cfg):
        self.length_low = data_cfg.min_len
        self.length_high = data_cfg.max_len + 1
        self.hops_low = data_cfg.min_hops
        self.hops_higher = data_cfg.max_hops + 1
        self.all_2tuples = ["".join(t) for t in itertools.product(letter_chars, repeat=2)]
        self.learn_mode = data_cfg.get("learn_mode", "next")
        self.data_choices = list(number_chars[:8])
        self.ambiguous = data_cfg.get("ambiguous", False)
        self.no_green_confusion = data_cfg.get("no_green_confusion", False)
    def generate_double_pointer_execution(self, n_samples):
        lengths = np.arange(self.length_low, self.length_high)
        samples = []
        answers = []
        while len(samples) < n_samples:
            length = np.random.choice(lengths)
            n_matching_hops = np.random.choice(np.arange(self.hops_low, min(self.hops_higher, length // 2)))
            tuple_choices = np.random.choice(self.all_2tuples, length * 7, replace=False)
            # select the positions where the green matching sequence will be
            positions = np.random.choice(np.arange(1, length), size=n_matching_hops, replace=False)
            cnt = 0
            question_words1 = ["" for _ in range(length)]
            question_words2 = ["" for _ in range(length)]
            remaining_positions = np.random.permutation([i for i in range(1, length) if i not in positions])
            question_words1[0] = tuple_choices[cnt]
            answer_learnseq = [question_words1[0]]
            for pos in positions:
                question_words1[pos] = (tuple_choices[cnt] + np.random.choice(self.data_choices) + tuple_choices[cnt + 1])
                answer_learnseq.append(question_words1[pos])
                cnt += 1
            cnt += 1
            cnt_confuse = cnt + length
            positions_next = np.random.permutation(positions)
            question_words2[0] = tuple_choices[cnt]
            answer = [question_words2[0]]
            # select the positions where the doppelgangers of the neighbours will be
            positions_confuse = np.setdiff1d(np.arange(1, length), positions_next)[0 : len(positions_next)]
            np.random.shuffle(positions_confuse)
            for i, pos in enumerate(positions_next):
                two_big_letters = np.random.choice(self.data_choices, size=2, replace=self.ambiguous)
                question_words2[pos] = (tuple_choices[cnt] + two_big_letters[0] + tuple_choices[cnt + 1])
                question_words2[positions_confuse[i]] = (tuple_choices[cnt] + two_big_letters[1] + tuple_choices[cnt + 1])
                answer.append(question_words2[pos])
                cnt += 1
                cnt_confuse += 1
            cnt = max(cnt, cnt_confuse) + 1
            remaining_next_positions = np.random.permutation([i for i in range(1, length) if i not in positions_next and \
            i not in positions_confuse])
            for pos in remaining_positions:
                question_words1[pos] = (tuple_choices[cnt] + np.random.choice(self.data_choices) + tuple_choices[cnt + 1])
                cnt += 1
                if self.no_green_confusion:
                    cnt += 1
            cnt += 1
            for pos in remaining_next_positions:
                question_words2[pos] = (tuple_choices[cnt] + np.random.choice(self.data_choices) + tuple_choices[cnt + 1])
                cnt += 2
            answer_learnnext = [question_words2[0]]
            for pos in positions:
                answer_learnnext.append(question_words2[pos])
            answer_seqnext = []
            for i in range(len(answer_learnseq)):
                answer_seqnext.append(answer_learnseq[i])
                answer_seqnext.append(answer_learnnext[i])
            answer.reverse()
            question_words = []
            for i in range(length):
                question_words.append(question_words1[i])
                question_words.append(question_words2[i])
            question_str = (f"pe {self.learn_mode}: " + " ".join(["".join(x) for x in question_words]) + " answer: ")
            samples.append(question_str)
            if self.learn_mode == "seq":
                answers.append(" ".join(answer_learnseq))
            elif self.learn_mode == "seqnext":
                answers.append(" ".join(answer_seqnext))
            elif self.learn_mode == "next":
                answers.append(" ".join(answer_learnnext))
        return samples, answers
    def generate(self, n_samples):
        samples, answers = self.generate_double_pointer_execution(n_samples)
        return samples, answers
\end{minted}
}
\caption{Code utilized for generating instances of the PEN task and its associated subtasks. The hyperparameters employed include a length ranging between $[40,50]$ and a number of hops ranging between $[10,20]$.}
\label{appendix:code-pen}
\end{figure}

\subsection{PER Multi-generation code}
\label{app:sec:algorithmic_compositions:subsec:per_multi_generation_code}

\begin{figure}[H]
{
\fontsize{5.3pt}{5.3pt}\selectfont
\begin{minted}{python}
import itertools
import numpy as np
letter_chars = list("abcdefghijklmnopqrstuvwxyz")
class DataConfig:
    def __init__(self, min_len, max_len, logname, learn_mode="seq"):
        self.min_len = min_len
        self.max_len = max_len
        self.logname = logname
        self.learn_mode = learn_mode
    def get(self, key, default):
        return getattr(self, key, default)
class PointerExecutionReverseMulticount:
    def __init__(self, data_cfg):
        self.length_low = data_cfg.min_len
        self.length_higher = data_cfg.max_len + 1
        self.logname = data_cfg.logname
        self.all_2tuples = ["".join(t) for t in itertools.product(letter_chars, repeat=2)]
        self.learn_mode = data_cfg.get("learn_mode", "seq")
        assert self.learn_mode in ["seq", "multiseq", "seqrev", "multiseqrev"]
    def generate_samples(self, n_samples):
        lengths = np.arange(self.length_low, self.length_higher)
        samples = []
        answers = []
        for _ in range(n_samples):
            length = np.random.choice(lengths)
            tuple_choices = np.random.choice(self.all_2tuples, length + 3, replace=False)
            last_word = tuple_choices[-3] + tuple_choices[-2]
            shuffled_tuple_choices1 = np.random.permutation(tuple_choices[:-3])
            shuffled_tuple_choices2 = np.random.permutation(tuple_choices[:-3])
            words = [ch1 + ch2 for ch1, ch2 in zip(shuffled_tuple_choices1, shuffled_tuple_choices2)]
            start = np.random.choice(words)
            words.append(last_word)
            if "rev" not in self.learn_mode:
                answer = self.solve_seqnext(words, start, self.learn_mode)
            else:
                # change the 2tuple of the start of the start word to a random one
                idx = words.index(start)
                words[idx] = tuple_choices[-1] + words[idx][2:]
                start = words[idx]
                answer, answer_n_left = self.solve_seqnext(words, start, self.learn_mode)
                if self.learn_mode == "seqrev":
                    answer = reversed([f"{w}" for i, w in enumerate(answer)])
                if self.learn_mode == "multiseqrev":
                    answer = reversed([f"{w}.{i*n}" for i, (w, n) in enumerate(zip(answer, answer_n_left))])
            question = (f"prand {self.learn_mode}: " + " ".join(words) + " | " + start + " answer: ")
            samples.append(question)
            answers.append(" ".join(answer))
        return samples, answers
    def solve_seqnext(self, words, start, mode):
        answer_next = []
        matching_seq = []
        current_word = start
        idx = words.index(current_word)
        n_left = 0
        answer_n_left = []
        while True:
            matching_seq.append(current_word)
            answer_next.append(words[idx + 1])
            answer_n_left.append(n_left)
            next_word = [(w, i) for i, w in enumerate(words) if w.startswith(current_word[-2:])]
            if len(next_word) == 0 and "rev" in mode:
                break
            assert len(next_word) == 1
            current_word, new_idx = next_word[0]
            if new_idx < idx:
                n_left += 1
            idx = new_idx
            if current_word in matching_seq:
                break
        if "rev" in mode:
            return matching_seq, answer_n_left
        if "multi" in mode:
            answer = []
            for i, (w, n) in enumerate(zip(matching_seq, answer_n_left)):
                answer.append(f"{w}.{i*n}")
            return answer
        return matching_seq
    def generate(self, n_samples):
        samples, answers = self.generate_samples(n_samples)
        return samples, answers
\end{minted}
}
\caption{Code employed for generating instances of the Pointer Execution Reverse Multicount task and its associated subtasks. The hyperparameters employed include a length ranging between $[10,20]$.}
\label{appendix:code-perm}
\end{figure}

\end{appendices}

\end{document}